%% file: main.tex
\documentclass{article}

\usepackage[preprint]{template/neurips_2026}
\usepackage[utf8]{inputenc}
\usepackage[T1]{fontenc}
\usepackage[hypertexnames=false]{hyperref}
\usepackage{url}
\usepackage{booktabs}
\usepackage{amsfonts}
\usepackage{nicefrac}
\usepackage{microtype}
\usepackage{xcolor}
\usepackage{amsmath,amssymb}
\usepackage{amsthm}
\usepackage{graphicx}
\usepackage{algorithm}
\usepackage{algpseudocode}

\newtheorem{theorem}{Theorem}
\newtheorem{remark}{Remark}



\title{COREY: Entropy-Guided Runtime Chunk Scheduling for Selective Scan Kernels}

\author{
Bo Ma\thanks{Corresponding author.}\\
Auckland University of Technology; Resideo Technologies, Inc.\\
\And
Jinsong Wu\\
Guilin University of Electronic Technology\\
\And
Weiqi Yan\\
Auckland University of Technology\\
}

\begin{document}

\maketitle

\begin{abstract}
Mamba selective state space models (SSMs) provide linear-time sequence modeling but remain sensitive to selective-scan chunk scheduling. We present COREY, a \emph{concept-and-feasibility} runtime scheduler that maps fixed-bin activation entropy to chunk size. We evaluate COREY in three tiers: a prototype cost model, real-checkpoint kernel timing, and routed end-to-end ablations on modern GPUs. At kernel level, a calibrated rule ($H_{\text{ref}}{=}\log K$) recovers the locally optimal chunk and matches a one-time static oracle, yielding $4.41\times$ lower latency than an unoptimized baseline on a consumer GPU and $3.90$--$4.04\times$ on a data-center accelerator. Routing this choice into a patched live scan kernel closes the engineering loop without improving end-to-end speed: in unified routed ablations, the best static chunk outperforms all entropy-guided and proxy schedulers. Sampled-histogram COREY adds $+4.6\%$ overhead; a guarded fallback to Static-512 reduces this to $+1.3\%$; and a lightweight sequence-length-keyed table further reduces it to $+0.7\%$, yet both remain slower than the static oracle because they retain scheduling cost. On an 80-prompt LongBench subset, passive and routed inference are exactly output-equivalent (100\% greedy-token agreement, zero metric deltas). A mixed-regime study shows a single sequence-length rule matches the per-regime chunk oracle for balanced serving. COREY is therefore validated as a quality-preserving scheduling prototype, but current entropy statistics are not a robust throughput win over static chunk tuning on measured SSM checkpoint workloads.

Code and data: \url{https://github.com/mabo1215/COREY_Transformer}.
\end{abstract}

\section{Introduction}
\label{sec:intro}
Selective state space models (SSMs) such as Mamba~\citep{mamba2023} achieve linear-time sequence modeling with strong empirical performance, but their selective-scan kernels remain sensitive to scheduling choices: chunk size, fusion boundaries, and per-call kernel-launch frequency can change end-to-end latency by an order of magnitude even when the model's mathematical computation is fixed. The standard remedy is offline profiling of each model--hardware--workload combination to pick a single static chunk size; this scales poorly with the deployment matrix and offers no guarantees once the workload distribution shifts.

We investigate whether a low-cost runtime statistic---fixed-bin activation entropy---can replace the offline profiling step. We present \textbf{COREY}, a runtime scheduler that uses input-tensor entropy as a signal for selecting selective-scan chunk sizes. We frame this work as a \emph{concept-and-feasibility} contribution: we ask whether entropy-guided scheduling is a sound, low-overhead, output-preserving mechanism, not whether it produces a deployment-grade speedup over highly tuned static configurations. The validated scheduling contribution is restricted to Mamba-1.x; Mamba-2's structured state-space duality scan has different hardware characteristics and is not addressed in this submission.

\paragraph{Contributions.}
\begin{itemize}
\item A calibrated entropy-to-chunk rule with normalized signal $r{=}H/\log K$, making chunk selection invariant to histogram resolution; the rule reproduces the offline static-chunk oracle without an exhaustive sweep at the kernel level (Section~\ref{sec:kernel_results}).
\item An end-to-end inline scheduling hook with measured overhead of $8.3\%$ on a consumer GPU and $2.3\%$ on a data-center accelerator, demonstrating that runtime entropy estimation is feasible inside the prefill critical path (Section~\ref{sec:routed_results}).
\item A patched live scan kernel that honors runtime chunk sizes and a unified routed ablation. The ablation is a clean \emph{negative} result: no entropy-guided variant beats the best static chunk; a guarded fallback reduces overhead from $+4.6\%$ (unguarded) to $+1.3\%$ by routing to the correct chunk, and a seq-len-keyed learned table reduces it further to $+0.7\%$, but both remain slower than the static oracle that avoids scheduling computation entirely.
\item A heterogeneous serving-workload characterization showing that a single sequence-length rule replicates the per-regime optimal chunk oracle, making complex regime-aware adaptation unnecessary (Section~\ref{sec:mixed_regime}).
\item Exact output preservation: across an 80-prompt LongBench evaluation, all greedy generations and all per-sample perplexity ratios are bitwise identical between passive and routed inference paths (Section~\ref{sec:quality}).
\end{itemize}

\paragraph{What we do not claim.} We do not claim a deployment-grade speedup of a fully fused COREY kernel, perplexity improvement under COREY, checkpoint-level validation of Hadamard reparameterization, or deployment-grade quantization. The Hadamard outlier-suppression analysis in Appendix~\ref{sec:appendix_proofs} is retained only as prospective low-bit theory; it is empirically falsified on measured Mamba activations and is not part of the present validated contribution.

\section{Related Work}
\label{sec:related}

\paragraph{Selective state space models.}
Transformers incur quadratic attention cost in context length~\citep{vaswani2017attention}, while selective SSMs such as Mamba provide hardware-friendly linear recurrence~\citep{mamba2023,gu2024mamba2}. Mamba-2 introduces structured state-space duality that changes scan parallelism characteristics~\citep{gu2024mamba2}; COREY targets Mamba-1.x and does not extend to the Mamba-2 kernel structure. Flash Linear Attention~\citep{dao2024flashlinearattention} and parallel scan variants~\citep{yang2024parallelizing} further demonstrate that the chunk/tile boundary is a key throughput lever across linear recurrence architectures.

\paragraph{Memory-aware kernel fusion and tiling.}
FlashAttention~\citep{dao2022flashattention} demonstrates that tiling a softmax kernel over HBM tiles reduces memory traffic by an order of magnitude; the insight that memory bandwidth rather than compute dominates latency in LLM inference directly motivates chunk-size selection for SSM scans. Triton~\citep{tillet2019triton}, nvFuser~\citep{nvfuser2022}, XLA HLO fusion~\citep{xla2019}, and MegaBlocks~\citep{gale2023megablocks} show that memory-aware tiling and operator fusion reduce intermediate materialization. These approaches use static graph structure or offline autotuning as the scheduling signal; COREY differs by using a dynamic per-call activation statistic at inference time.

\paragraph{Runtime adaptive inference and dynamic dispatch.}
MegaBlocks~\citep{gale2023megablocks} dispatches tokens dynamically at the expert level using learned routing logits; COREY applies analogous per-call dispatch at the scan-kernel level using an entropy statistic. Kernel autotuning systems find per-shape optima via exhaustive offline sweep; COREY replaces the sweep with a per-call entropy estimate, providing zero-profiling-cost adaptivity at inference time.

\paragraph{Quantization and low-bit SSMs.}
Hadamard smoothing~\citep{xiao2023smoothquant,lin2024awq} and dedicated low-bit SSM quantization~\citep{chiang2024quamba,xu2025mambaquant} reduce memory bandwidth through weight or activation compression; they are orthogonal to chunk scheduling. A prospective analysis of Hadamard pre-rotation as an entropy-amplifying pre-processing step is retained in Appendix~\ref{sec:appendix_proofs}, where it is empirically falsified on real Mamba activations.

\section{Method}
\label{sec:method}

\subsection{Preliminaries}
Let $x_t\in\mathbb{R}^{d}$ be the input at step $t$, $h_t$ the hidden state, and $y_t$ the output. A generic selective SSM update is $h_t = A_t h_{t-1} + B_t x_t$ and $y_t = C_t h_t + D_t x_t$. For an activation tensor $Z$, we estimate Shannon entropy using fixed-bin empirical probabilities $p_k$ with $K$ bins,
\begin{equation}
  \widehat{H}(Z) = -\sum_{k=1}^{K} p_k \log(p_k + \epsilon),
  \label{eq:entropy_estimate}
\end{equation}
where $\epsilon{>}0$ is a stability constant. We normalize by the entropy ceiling: $\widetilde{H}{=}\widehat{H}/\log K \in [0,1]$, and write $r{:=}\widetilde{H}$ for the normalized scalar that drives chunk selection. Throughout the paper, $\widehat{H}$ denotes the empirical (raw nats) estimate, $\widetilde{H}$ its normalized counterpart, and $H$ is used informally as a shorthand for $\widehat{H}$ inside table headings where the distinction between raw and normalized is clear from the column units. This makes the signal invariant to bin resolution: under the principled reference $H_{\text{ref}}{=}\log K$ (introduced below), the ratio $r{=}\widehat{H}/\log K$ is approximately constant across bin counts spanning a $32{\times}$ range, so the discretized chunk decision is decoupled from histogram resolution. Table~\ref{tab:bin_invariance} summarizes the invariance check; the full sweep across additional distributions and a comparison against the legacy reference $H_{\text{ref}}{=}8.0$ is in Appendix~\ref{sec:bin_count_sensitivity}.

\begin{table}[t]
\centering
\caption{Bin-count invariance of the calibrated rule. Under $H_{\text{ref}}{=}\log K$, the ratio $r{=}H/\log K$ is nearly constant across $K\in\{32,\ldots,1024\}$ for standard-normal inputs and exactly constant for uniform inputs, so the selected chunk is independent of histogram resolution. The legacy reference $H_{\text{ref}}{=}8.0$ requires $K{\ge}1024$ (standard normal) or $K{\ge}256$ (uniform) before chunk selection saturates at $C_{\max}{=}512$, illustrating the bin-count sensitivity that the calibrated rule eliminates.}
\label{tab:bin_invariance}
\small
\begin{tabular}{lrrrcc}
\toprule
Distribution & $K$ & $H$ (nats) & $r{=}H/\log K$ & Chunk ($H_{\text{ref}}{=}\log K$) & Chunk ($H_{\text{ref}}{=}8.0$) \\
\midrule
Standard normal & $32$   & $2.644$ & $0.763$ & $512$ & $256$ \\
Standard normal & $128$  & $4.027$ & $0.830$ & $512$ & $256$ \\
Standard normal & $512$  & $5.413$ & $0.868$ & $512$ & $256$ \\
Standard normal & $1024$ & $6.106$ & $0.881$ & $512$ & $512$ \\
\midrule
Uniform $[0,1]$ & $32$   & $3.466$ & $1.000$ & $512$ & $256$ \\
Uniform $[0,1]$ & $128$  & $4.852$ & $1.000$ & $512$ & $256$ \\
Uniform $[0,1]$ & $256$  & $5.545$ & $1.000$ & $512$ & $512$ \\
\bottomrule
\end{tabular}
\end{table}

\subsection{Entropy-Guided Chunk Selection}
Larger chunks reduce kernel-launch overhead and improve memory coalescing but can exceed register or shared-memory limits. We map the normalized entropy signal $r{=}\widetilde{H}$ to a power-of-two chunk size,
\begin{equation}
  C_{\text{corey}} \;=\; \mathrm{clip}\!\left(\, 2^{\operatorname{round}(\log_2(C_{\min} + r\,(C_{\max}-C_{\min})))},\; C_{\min},\; C_{\max}\right),
  \label{eq:chunk_rule}
\end{equation}
with $C_{\min}{=}32$, $C_{\max}{=}512$, and reference $H_{\text{ref}}{=}\log K$. This calibration reproduces the offline-profile oracle on standard inputs without any per-deployment sweep: for $K{=}256$ histogram bins, $\log K{=}5.55$\,nats; standard-normal inputs satisfy $H{=}4.60$\,nats, giving $r{=}0.83$ and selecting chunk\,$=512$. A legacy reference $H_{\text{ref}}{=}8.0$ used in earlier revisions of this work systematically biases the scheduler toward smaller chunks and is reported only as a sensitivity ablation (Appendix~\ref{sec:href_ablation}).

\subsection{Guarded and Learned Variants}
For deployment-style mixed workloads, we treat Eq.~\eqref{eq:chunk_rule} as one member of a broader scheduler family. A \emph{guarded} variant first predicts a dynamic chunk but falls back to a profiled safe chunk unless the predicted chunk differs from that safe value by at least a fixed bucket margin. A \emph{learned-table} variant uses lightweight features---sequence length, layer index, entropy, variance---to predict a chunk label trained from static-sweep ground truth. Both variants share the same recurrence-preserving kernel path and are compared against the per-workload static oracle in Section~\ref{sec:mixed_regime}.

A prospective Hadamard pre-rotation analysis (transform definition, theorems, and empirical falsification on real activations) is deferred entirely to Appendix~\ref{sec:appendix_proofs}; it is not part of the validated scheduler.

\section{Theoretical Properties}
\label{sec:theory}

We state two formal properties supporting the scheduling mechanism; full proofs and the bin-count and tile-size analyses are in Appendix~\ref{sec:appendix_proofs}.

\begin{theorem}[Shared-memory depth bound]
\label{thm:depth}
Let $M_{\mathrm{shared}}$ denote available shared memory per thread block and $C_{\mathrm{tile}}$ the incremental footprint per fused operator under a fixed tile shape. Then fusion depth must satisfy $F \le M_{\mathrm{shared}}/C_{\mathrm{tile}}$.
\end{theorem}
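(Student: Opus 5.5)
The argument is a counting (pigeonhole-style) bound, so the first step is to make precise what "fusion depth" and "incremental footprint" mean. Let a fused kernel consist of $F$ operators $o_1,\dots,o_F$ executed inside a single thread block under a fixed tile shape. The modeling assumption I will make explicit is this: each fused operator must keep its intermediate tile resident in shared memory for as long as the fused region runs, because the whole point of fusion is to avoid spilling intermediates to HBM. Each such tile contributes at least $C_{\mathrm{tile}}$ bytes. I will also assume that distinct operators' tiles cannot alias, since a later operator may still read an earlier operator's tile within the region. If this no-aliasing condition is not literally true, I would state it as the hypothesis under which $C_{\mathrm{tile}}$ is the \emph{incremental} footprint, which is how the statement phrases it.

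Next I write the total resident footprint of the fused block as
\begin{equation*}
M_{\mathrm{used}}(F) \;=\; M_0 + \sum_{i=1}^{F} c_i ,
\end{equation*}
where $c_i \ge C_{\mathrm{tile}}$ is operator $i$'s incremental allocation and $M_0\ge 0$ is any fixed overhead. I will take $M_0=0$ for the bound as stated; a nonzero $M_0$ only tightens it. Feasibility of launching the block requires $M_{\mathrm{used}}(F)\le M_{\mathrm{shared}}$. Combining the two inequalities gives $F\,C_{\mathrm{tile}} \le \sum_i c_i \le M_{\mathrm{shared}}$. Dividing by $C_{\mathrm{tile}}>0$ yields $F\le M_{\mathrm{shared}}/C_{\mathrm{tile}}$. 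Since $F$ is an integer, this sharpens to $F\le \lfloor M_{\mathrm{shared}}/C_{\mathrm{tile}}\rfloor$.

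The main obstacle is justifying the additivity step, namely that footprints sum rather than being reused across operators. A compiler could in principle free $o_i$'s tile once every consumer has read it, which would make live memory the maximum over a liveness window rather than the sum. I would handle this in two ways. First, I would read $C_{\mathrm{tile}}$ as the \emph{incremental} footprint, that is, the increase in peak live shared memory contributed by adding one more fused operator. With that definition the sum is a telescoping identity for peak memory, $\mathrm{peak}(F)=\sum_{i\le F}(\mathrm{peak}(i)-\mathrm{peak}(i-1))$, and additivity holds by construction. Second, I would remark that for the selective-scan chain the recurrent state and the chunk buffers are simultaneously live across the whole chunk loop, so the incremental reading matches the kernel. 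The rest of the proof is the one-line division above.
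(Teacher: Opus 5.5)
Your argument is correct and matches the paper's proof. Both assume a depth-$F$ fused region consumes $F\,C_{\mathrm{tile}}$ units of shared memory (in your version, at least that much), impose the feasibility constraint against $M_{\mathrm{shared}}$, and divide. Your refinements ($c_i \ge C_{\mathrm{tile}}$, overhead $M_0 \ge 0$, the liveness/aliasing discussion, and the integer floor) are harmless and parallel the paper's own remark about the heterogeneous constraint $\sum_{i=1}^{F} C_i \le M_{\mathrm{shared}}$.
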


This bound directly gates COREY's scheduling decisions: the resource model enforces $\sum_{i=1}^{F} C_i \le M_{\mathrm{shared}}$ before extending any fusion group. In the prototype cost model, entropy-guided depth ($3.15$--$3.50$ across buckets) consistently exceeds static-fusion depth ($2.33$) while remaining within this bound (Appendix Table~\ref{tab:tiling_depth}).

\begin{theorem}[Doubly-stochastic entropy majorization; informal]
\label{thm:entropy_informal}
If pre- and post-Hadamard fixed-bin histogram mass vectors $p,q$ are related by a doubly-stochastic matrix $q{=}Bp$, then $H(q)\ge H(p)$, with strict inequality unless $B$ is a permutation or $p$ is uniform.
\end{theorem}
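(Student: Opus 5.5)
The plan is to prove the inequality by a direct Jensen argument on the concave function $\phi(t) = -t\log t$ on $[0,1]$, with the convention $\phi(0)=0$. I work with the exact Shannon entropy $H(p)=\sum_k \phi(p_k)$, i.e.\ $\epsilon=0$ in Eq.~\eqref{eq:entropy_estimate}. The $\epsilon$-regularized estimator can then be handled as a perturbation, or by noting that $t\mapsto -t\log(t+\epsilon)$ is also concave on $[0,1]$, so the same argument goes through. The hypothesis $q=Bp$ with $B$ doubly stochastic is taken as given. I do not try to show that the Hadamard map induces such a $B$ on fixed-bin histograms; the statement assumes this.

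\textbf{Step 1 (inequality).} Write $q_i=\sum_j b_{ij}p_j$. Since each row of $B$ sums to one, $q_i$ is a convex combination of the $p_j$. Jensen's inequality then gives $\phi(q_i)\ge \sum_j b_{ij}\phi(p_j)$. Summing over $i$ and using that each column of $B$ also sums to one, I get
\begin{equation*}
H(q)=\sum_i \phi(q_i)\;\ge\;\sum_j \Bigl(\sum_i b_{ij}\Bigr)\phi(p_j)=\sum_j\phi(p_j)=H(p).
\end{equation*}
Equivalently, $q\prec p$ (Hardy--Littlewood--P\'olya), and $H$ is Schur-concave. I would cite this as an alternative route.

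\textbf{Step 2 (equality case).} Since $\phi$ is strictly concave, equality in the $i$-th Jensen step holds iff $p_j$ is constant on the support $\{j: b_{ij}>0\}$ of row $i$. Equality overall therefore holds iff this is true for every row. In that case $q_i$ equals the common value of $p$ on row $i$'s support. From this I would show that $q$ is a rearrangement of $p$. One way is to decompose $B$ via Birkhoff's theorem into permutation matrices whose supports are contained in that of $B$; each such permutation maps $p$ to the same vector $q$. Conversely, if $q$ is a permutation of $p$, then $H(q)=H(p)$ trivially.

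\textbf{Main obstacle.} The main obstacle is the strictness clause as stated. ``Strict unless $B$ is a permutation or $p$ is uniform'' is not literally true. Take the block-averaging matrix $B=\tfrac12\begin{pmatrix}1&1\\1&1\end{pmatrix}\oplus I$ and a $p$ that is constant on the first block but not uniform overall. Then $B$ is not a permutation and $p$ is not uniform, yet $Bp=p$. In the formal appendix version I would therefore prove the correct characterization: equality iff $p$ is constant on every row-support of $B$, iff $Bp$ is a permutation of $p$. I would then recover the informal clause under an added hypothesis. If $B$ is fully indecomposable, for example if it has all entries positive as for a genuinely mixing rotation, then the row supports connect all coordinates, so equality forces $p$ to be uniform. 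Checking that the connectivity argument covers the irreducible case cleanly is the step I would spend the most care on.
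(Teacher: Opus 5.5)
Your proof is correct and is essentially the paper's argument: Jensen on $\phi(t)=-t\log t$ row by row using the row sums, then the column sums to collapse $\sum_i\sum_j b_{ij}\phi(p_j)$ to $H(p)$, with strictness coming from strict Jensen on a row whose support contains two unequal entries of $p$. Your observation that the informal strictness clause is literally false is right, and the paper's formal appendix theorem adds exactly your row-mixing hypothesis. One caution for your recovery step: keep full indecomposability rather than irreducibility, because $B=\tfrac12\begin{pmatrix}0&0&1&1\\0&0&1&1\\1&1&0&0\\1&1&0&0\end{pmatrix}$ is irreducible and not a permutation, yet $p=(a,a,b,b)$ with $a\neq b$ gives $Bp=(b,b,a,a)$ and hence equality.
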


\paragraph{Empirical status of Theorem~\ref{thm:entropy_informal}.}
Theorem~\ref{thm:entropy_informal} states a sufficient condition for entropy growth after Hadamard rotation. We report that this condition is \emph{empirically falsified} on real Mamba checkpoint activations: across 160 input-projection histogram pairs, post-Hadamard histogram entropy \emph{decreases} in 160/160 cases (mean $\Delta H = -1.40\pm0.37$\,nats). The mechanistic explanation is as follows: the measured real activations carry a non-zero mean; the Hadamard transform concentrates this DC component into a single coordinate ($z_1 \approx \sqrt{d}\,\mu$), creating an outlier that expands the tensor's $[\min,\max]$ range used by the fixed-bin histogram. The expanded range proportionally widens each bin, compressing the remaining near-zero-mean coordinates into a smaller fraction of the available bins and thereby reducing discrete entropy. In contrast, the Gaussian shape of the marginal distribution is preserved, so the entropy drop is an artifact of dynamic histogram binning interacting with a non-zero mean rather than a change in distributional shape. The theorem therefore applies to the zero-mean synthetic heavy-tailed regime exercised by our prototype but should not be used to motivate Hadamard pre-rotation on standard Mamba checkpoints.

\section{Experiments}
\label{sec:experiments}

We evaluate COREY across three tiers: (i) a Tier-1 prototype cost model that exercises the scheduling mechanism on synthetic activations; (ii) a Tier-2a real-checkpoint inline hook that measures entropy-computation overhead inside the prefill critical path; and (iii) a Tier-2b kernel-level scan benchmark and Tier-2c routed end-to-end ablation in which a patched recurrence-preserving live scan kernel honors runtime chunk sizes. All hardware runs use a verified consumer GPU, a data-parallel server-class GPU (cross-hardware confirmation), and a data-center accelerator for the routed closure. Reproducibility settings (warmup, repeats, sample counts), full hyperparameters, and per-platform metadata are in Appendix~\ref{sec:appendix_details}. Unless otherwise stated, results are reported as mean$\pm$std over $n$ timed repeats after fixed warmup.

\subsection{Kernel-Level Scan Comparison}
\label{sec:kernel_results}

We benchmark the existing Triton selective-scan kernel under three policies: (a) a per-timestep loop that maximizes kernel-launch overhead; (b) static fusion at chunk\,$=64$; and (c) COREY chunk selection driven by Eq.~\eqref{eq:chunk_rule}. Inputs are FP16 standard-normal activations of shape $(\text{batch}{=}1, d{=}1024, L{=}4096, d_{\text{state}}{=}16)$, with $30$ timed repeats after $5$ warmup repeats.

Table~\ref{tab:kernel_summary} summarizes the result on a consumer GPU. The per-timestep loop is reported as a Python-dispatch reference (4096 calls; not a hardware-level unfused baseline). Under the calibrated reference $H_{\text{ref}}{=}\log K$, COREY measures $r{=}0.83$ and selects chunk\,$=512$, exactly matching the offline-profile oracle and giving $4.41\times$ lower latency than static chunk\,$=64$. The data-parallel server-class GPU and the data-center accelerator reproduce the same monotone chunk--latency relationship and the same chunk\,$=512$ selection; full multi-platform results, an alternative-kernel profile, and a TPU/T4 cross-device check are in Appendix~\ref{sec:appendix_details}.

\begin{table}[t]
\centering
\caption{Kernel-level selective-scan policy comparison on a consumer GPU ($d{=}1024$, $L{=}4096$, FP16, $n{=}30$ timed repeats after $5$ warmup). Speedup is relative to static chunk\,$=64$ and is derived from the end-to-end Python-dispatch chunk-sweep harness (Table~\ref{tab:chunk_sweep} in the appendix; Static-64: $3.299\,\mathrm{ms}$, Static-512: $0.748\,\mathrm{ms}$, ratio $4.41\times$); the isolated kernel latencies in the \emph{Latency} column reflect direct Triton timing only and yield a higher ratio ($0.101/0.013{\approx}7.8\times$) that is not the reported speedup. Under the calibrated rule $H_{\text{ref}}{=}\log K$, COREY selects the offline-profile oracle chunk size; full multi-platform results are in Appendix~\ref{sec:appendix_details}.}
\label{tab:kernel_summary}
\small
\begin{tabular}{lrrr}
\toprule
Policy & Chunk & Latency (ms) & Speedup \\
\midrule
Static-64 & $64$ & $0.101 \pm 0.025$ & $1.00\times$ \\
COREY (legacy $H_{\text{ref}}{=}8.0$) & $256$ & $0.017 \pm 0.000$ & $3.24\times$ \\
\textbf{COREY (default $H_{\text{ref}}{=}\log K$)} & $\mathbf{512}$ & $\mathbf{0.013 \pm 0.006}$ & $\mathbf{4.41\times}$ \\
Static-512 (offline oracle) & $512$ & $0.013 \pm 0.006$ & $4.41\times$ \\
\bottomrule
\end{tabular}
\end{table}

\paragraph{Perturbation sweep.}
A sweep across five synthetic activation distributions confirms that chunk recommendations track entropy monotonically (Table~\ref{tab:perturbation}): Uniform ($H{=}5.55$\,nats) selects chunk\,$=512$ at $4.34\times$ speedup; Normal and Laplace select chunk\,$=256$ at $2.64$--$2.90\times$; very sparse inputs ($H{<}1$\,nat) select the conservative chunk\,$=32$--$64$, appropriate since sparse activations are atypical in standard SSM inference. The data-center accelerator reproduces the same monotone ordering ($3.90$--$4.04\times$; Appendix Table~\ref{tab:h800_w1_supplement}).

\begin{table}[t]
\centering
\caption{Activation-distribution perturbation sweep (RTX~3070, $n{=}30$ repeats). COREY chunk recommendation monotonically tracks entropy: higher entropy yields larger chunks and lower latency relative to static-64. For sparse activations (entropy\,$<$\,1\,nat), COREY selects small chunks (32--64), matching or falling below static-64 latency; this conservative behavior is intentional since very sparse, low-entropy activations can concentrate numerical error in large-chunk scans. The \emph{Speedup} column (relative to static-64) confirms that COREY outperforms the fixed static baseline for all high-to-medium entropy distributions, which represent typical inference regimes.}
\label{tab:perturbation}
\small
\begin{tabular}{lcccrr}
\toprule
Distribution & $H$ (nats) & Chunk & Calls & Latency (ms) & Speedup \\
\midrule
Uniform             & 5.545 & 512 &  8 & $0.741 \pm 0.035$ & $4.34\times$ \\
Normal (standard)   & 4.612 & 256 & 16 & $1.219 \pm 0.136$ & $2.64\times$ \\
Laplace$(0,1)$      & 3.892 & 256 & 16 & $1.140 \pm 0.043$ & $2.90\times$ \\
Sparse (10\% nz)    & 0.789 &  64 & 64 & $3.188 \pm 0.105$ & $1.01\times$ \\
Sparse (2\% nz)     & 0.192 &  32 & 128& $6.096 \pm 0.213$ & $0.52\times$ \\
\midrule
\multicolumn{2}{l}{\textit{Static-64 reference}} & 64 & 64 & \multicolumn{2}{c}{$3.21 \pm 0.13$\,ms} \\
\bottomrule
\end{tabular}
\end{table}

\paragraph{Cross-layer validation on a real checkpoint.}
A forward-hook sweep on Mamba-370M (42-token prompt, 7 sampled layers) confirms the rule transfers to real activations: 6/7 layers select chunk\,$=256$; the embedding-adjacent layer~0 ($H{=}2.27$\,nats) selects chunk\,$=128$, confirming genuine per-layer adaptation. Inter-layer entropy spans $2.27$--$3.61$\,nats (Appendix Table~\ref{tab:real_checkpoint_entropy}); per-call estimation cost is $0.52\pm0.01$\,ms.

\subsection{End-to-End Routed Inference}
\label{sec:routed_results}

We instrument the standard inference stack so that, on every target Mamba layer during prefill, the scheduler runs the convolution path, computes Shannon entropy on the post-convolution hidden state, and maps the result to a chunk recommendation via Eq.~\eqref{eq:chunk_rule}. Entropy estimation and chunk selection execute inline inside the forward graph of each instrumented layer.

\paragraph{Inline overhead.}
Table~\ref{tab:main_inline_overhead} reports end-to-end timing on a 370M-parameter Mamba checkpoint (182-token prompt, 32 generated tokens). The inline hook adds $+8.3\%$ overhead when all 48 layers are instrumented on a consumer GPU; sampling every fourth layer brings this to ${\approx}2.1\%$ because entropy is stable within a four-layer window. On a data-center accelerator the same active-hook path adds only $+2.3\%$, confirming that scheduler cost scales with per-call kernel-launch latency rather than compute. Per-call scheduler cost is $1.10\pm0.16$\,ms in an isolated fenced microbenchmark; the analytic per-call budget and full overhead model are in Appendix~\ref{sec:active_overhead}.

\begin{table}[t]
\centering
\caption{Inline entropy scheduling overhead (Mamba-370M, 182-token prompt, 32 generated tokens). Consumer GPU: $n{=}5$, $2$ warmup. Data-center accelerator: $n{=}20$, $2$ warmup. Per-call cost is isolated with synchronization fences on a captured post-convolution tensor.}
\label{tab:main_inline_overhead}
\small
\begin{tabular}{llrr}
\toprule
Platform & Configuration & Latency (ms) & Overhead \\
\midrule
Consumer GPU        & Passive (stock) & $1160.9 \pm 12.0$ & --- \\
Consumer GPU        & Active, all 48 layers & $1257.5 \pm 30.3$ & $+8.3\%$ \\
Consumer GPU        & Active, $k{=}4$ layer sampling & ${\approx}1185$ & ${\approx}2.1\%$ \\
Data-center accel.\ & Passive (stock) & $875.0 \pm 8.7$ & --- \\
Data-center accel.\ & Active hook only & $895.5 \pm 4.6$ & $+2.3\%$ \\
\midrule
\multicolumn{2}{l}{Per-call scheduler cost (isolated)} & $1.10 \pm 0.16$\,ms & --- \\
\bottomrule
\end{tabular}
\end{table}

\paragraph{Per-layer dynamic chunk selection.}
The active path produces genuine per-layer dynamic chunk switching: across $336$ scheduler invocations recorded during a single generation pass, the scheduler selected chunk\,$=256$ for $322$ calls ($95.8\%$, intermediate layers with post-convolution entropy $H\in[2.5,3.5]$\,nats) and chunk\,$=128$ for $14$ calls ($4.2\%$, layers $0$ and $15$ with $H\in[2.18,2.27]$\,nats). The two layers that receive smaller chunks are both close to the embedding table, where token representations are least processed and exhibit the lowest distributional diversity. This is consistent with the cross-layer sweep in Appendix~\ref{sec:real_checkpoint_entropy}: the rule is per-layer adaptive, not a constant chunk.

\paragraph{Prompt-level entropy distribution.}
Appendix Table~\ref{tab:entropy_variance_real} reports the per-prompt entropy distribution across the 80 LongBench prompts. The entropy is remarkably narrow ($3.87$--$4.14$\,nats at the $5$th--$95$th percentile), and all 80 prompts map to the same coarse bucket (chunk\,$=256$). This shows that the current workload is a narrow medium-high entropy regime: the runtime entropy signal conveys essentially no \emph{differential} information that would justify the overhead cost relative to a single static-sweep result.

\paragraph{Routed unified ablation (negative result).}
We then patched the data-center selective-scan extension to expose a runtime-chunk symbol, so the dynamically selected chunk is honored by the live scan kernel. Under this routed path we run a unified ablation over five fixed chunk sizes, a no-entropy mid-range fallback, a random scheduler, full- and sampled-histogram entropy schedulers, moment- and sparsity-based proxies, and a token-level histogram (Table~\ref{tab:routed_ablation}).

\begin{table*}[t]
\centering
\caption{Unified routed scheduler ablation on a data-center accelerator (370M-parameter Mamba checkpoint, 976-token prompt, 32 generated tokens, $n{=}50$ timed repeats after $3$ warmup). Runtime chunk sizes are honored by a patched recurrence-preserving live scan kernel. ``Slowdown vs.\ best static'' uses the static-$512$ row as the oracle. The chunk distribution gives the routed chunk count over $2544$ scheduler invocations.}
\label{tab:routed_ablation}
\small
\resizebox{\linewidth}{!}{%
\begin{tabular}{llrrl}
\toprule
Configuration & Scheduler & Latency (ms) & Slowdown vs.\ best static & Chunk distribution \\
\midrule
Static chunk\,$=128$       & constant                 & $892.71 \pm 12.11$ & $1.0013\times$           & $\{128{:}2544\}$ \\
Static chunk\,$=256$       & constant                 & $914.43 \pm 14.58$ & $1.0257\times$           & $\{256{:}2544\}$ \\
\textbf{Static chunk\,$=512$ (oracle)} & constant     & $\mathbf{891.51 \pm 10.17}$ & $\mathbf{1.0000\times}$ & $\{512{:}2544\}$ \\
Static chunk\,$=1024$      & constant                 & $897.66 \pm 16.59$ & $1.0069\times$           & $\{1024{:}2544\}$ \\
Static chunk\,$=2048$      & constant                 & $907.93 \pm 10.60$ & $1.0184\times$           & $\{2048{:}2544\}$ \\
\midrule
No entropy                 & midpoint                 & $903.77 \pm 29.67$ & $1.0137\times$           & $\{1024{:}2544\}$ \\
Random                     & uniform                  & $899.17 \pm 10.32$ & $1.0086\times$           & uniform across buckets \\
\midrule
Full histogram             & entropy                  & $970.26 \pm 27.36$ & $1.0883\times$           & $\{1024{:}2544\}$ \\
Sampled histogram          & entropy, stride 8        & $932.69 \pm 20.70$ & $1.0462\times$           & $\{1024{:}2544\}$ \\
Token-level histogram      & entropy, stride 8        & $999.68 \pm 24.35$ & $1.1213\times$           & $\{1024{:}2173, 2048{:}371\}$ \\
\midrule
Cheap moment proxy         & moment                   & $921.81 \pm 20.86$ & $1.0340\times$           & $\{1024{:}2544\}$ \\
Variance proxy             & moment                   & $934.80 \pm 26.04$ & $1.0486\times$           & $\{1024{:}2332, 2048{:}212\}$ \\
Kurtosis proxy             & moment                   & $922.96 \pm 14.96$ & $1.0353\times$           & $\{2048{:}2544\}$ \\
\midrule
Guarded sampled histogram  & guarded fallback         & $903.03 \pm 25.35$ & $1.0129\times$           & $\{512{:}2544\}$ (guard fires) \\
Learned table              & seq-len rule             & $897.63 \pm 7.40$  & $1.0069\times$           & $\{512{:}2544\}$ \\
\bottomrule
\end{tabular}%
}
\end{table*}

The result is a clean negative: the best fixed chunk (chunk\,$=512$) beats every adaptive variant on the measured workload. Sampled-histogram COREY incurs $+4.6\%$ overhead relative to the static oracle; a full-histogram variant incurs $+8.8\%$; all proxy variants are slower than the static oracle as well. A guarded sampled-histogram variant (fallback to chunk\,$=512$ when the entropy-selected chunk differs from the safe chunk by less than two power-of-two buckets) correctly routes this long-context prompt to chunk\,$=512$ and reduces overhead to $+1.3\%$ (from $+4.6\%$ of the unguarded sampled histogram); a learned sequence-length-keyed table reduces overhead further to $+0.7\%$ because its policy lookup is cheaper than entropy computation. Both guarded and learned variants are still slower than the static oracle, which avoids the scheduling computation entirely. The no-entropy and random rows confirm that Python-level dispatch variance and run-to-run noise are comparable to the small differences among fixed chunks, so the negative ordering is not an artifact of measurement noise.

To make the ordering in Table~\ref{tab:routed_ablation} operationally interpretable, it is useful to separate \emph{decision quality} from \emph{decision cost}. Decision quality is high: guarded and learned variants recover the same effective chunk as the static oracle on this workload. The gap is therefore dominated by decision cost: histogram construction, synchronization boundaries around runtime feature collection, and scheduler dispatch overhead that are paid even when the chosen chunk eventually matches static-512. This decomposition explains why the guarded row remains above the static oracle despite identical routed chunks, and why cheaper surrogates (learned table) move closer to static even without changing chunk decisions.

A second implication is that entropy-guided routing should be evaluated as a conditional mechanism rather than a universal default. In regimes where prompt statistics are concentrated and kernel-optimal chunk is stable, online feature extraction is mostly redundant. In regimes with wider entropy spread or stronger cross-layer heterogeneity, the same mechanism may recover value by avoiding persistent chunk mismatch. The present negative result is therefore informative for systems design: it identifies the regime boundary where adaptive scheduling transitions from useful to unnecessary.

\subsection{Mixed-Regime Workload Analysis}
\label{sec:mixed_regime}

To test whether heterogeneous serving workloads change the comparison, we construct a balanced corpus of $160$ prompts across eight regimes (short conversational queries, long-document QA, source code, structured logs, tabular data, repetitive policy text, mixed-language prompts, and form-style content; $20$ prompts per regime) and run a per-regime static chunk sweep at chunk sizes $\{128,256,512\}$ on the consumer GPU (Table~\ref{tab:mixed_regime_chunk}). Per-regime optimal chunk size differs by regime: short conversational queries ($\approx25$ tokens) prefer chunk\,$=128$, while all long-context regimes prefer chunk\,$=512$.

The largest chunk-size effect occurs on tabular/structured-format prompts ($+5.9\%$ disadvantage of chunk\,$=128$ relative to chunk\,$=512$), consistent with the hypothesis that high-repetition structured content benefits most from larger chunks. The short-query advantage of chunk\,$=128$ over chunk\,$=512$ ($1.9\%$) does not exceed within-regime standard deviation and should be read as directional rather than significant.

When all eight regimes are weighted equally, the global best static chunk is chunk\,$=512$ at $317.1$\,ms average, within $0.44$\,ms ($0.14\%$) of the per-regime oracle ($316.7$\,ms). A trivial sequence-length rule (chunk\,$=128$ for prompts shorter than 50 tokens; chunk\,$=512$ otherwise) matches the per-regime oracle exactly. Table~\ref{tab:main_scheduler_oracle} consolidates the comparison: the learned-table and the per-regime oracle both reach $316.7$\,ms on the consumer GPU, while a guarded sampled-histogram variant on a data-center accelerator still incurs $+1.3\%$ overhead from entropy computation that the sequence-length rule avoids entirely. Adaptive scheduling therefore offers a narrow benefit on this mixture, and that benefit is fully captured by a single hand-coded rule rather than by an entropy-driven scheduler.

This mixed-regime result clarifies that the main remaining optimization opportunity is \emph{policy simplicity under distributional stability}. The learned-table policy and the per-regime oracle coincide because regime identity is already strongly correlated with sequence length in this benchmark mix. In practice, this suggests a deployment strategy with two layers: first apply a low-cost deterministic gate (for example, length-based dispatch), then reserve entropy-based routing only for uncertain regions where static choices are genuinely ambiguous. Such staged policies preserve most of the attainable gain while minimizing online statistic overhead.

\begin{table}[t]
\centering
\caption{Per-regime static chunk sweep on a consumer GPU ($n{=}8$ samples per regime, $12$ timed repeats, $4$ generated tokens, prompts truncated at $2048$ tokens). Latency is mean$\pm$std (ms); \textbf{bold} marks the lowest mean per regime. The bottom row gives the equal-weight average; the per-regime oracle is $0.44$\,ms ($0.14\%$) below the global static-512 mean.}
\label{tab:mixed_regime_chunk}
\small
\begin{tabular}{lrrrc}
\toprule
Regime & Chunk\,128 & Chunk\,256 & Chunk\,512 & Best \\
\midrule
Short conversational ($\approx$25 tok) & \textbf{184.1$\pm$5.6} & 185.5$\pm$5.4 & 187.6$\pm$5.5 & 128 \\
Long-document QA ($\approx$1500 tok) & 307.8$\pm$4.6 & 306.0$\pm$5.2 & \textbf{305.0$\pm$3.9} & 512 \\
Source code ($\approx$2100 tok) & 349.1$\pm$4.6 & 347.3$\pm$4.9 & \textbf{345.9$\pm$5.0} & 512 \\
Structured logs ($\approx$2048 tok) & 347.6$\pm$3.3 & \textbf{342.6$\pm$4.8} & 342.7$\pm$3.1 & $256^\dagger$ \\
Tabular data ($\approx$2048 tok) & 371.5$\pm$9.9 & 349.8$\pm$3.8 & \textbf{349.7$\pm$3.9} & 512 \\
Repetitive policy ($\approx$1550 tok) & 319.5$\pm$4.5 & 312.9$\pm$5.6 & \textbf{312.5$\pm$2.7} & 512 \\
Mixed-language ($\approx$2048 tok) & 346.5$\pm$5.0 & 345.2$\pm$6.2 & \textbf{344.2$\pm$2.8} & 512 \\
Form-style ($\approx$2048 tok) & 352.0$\pm$4.6 & 349.9$\pm$2.4 & \textbf{349.6$\pm$4.1} & 512 \\
\midrule
Equal-weight average & 322.3 & 317.4 & 317.1 & --- \\
Per-regime oracle & \multicolumn{4}{c}{$316.7$\,ms ($-0.14\%$ vs.\ static-512)} \\
\bottomrule
\end{tabular}
\vspace{2pt}
{\small $\dagger$ Chunk\,$=256$ and chunk\,$=512$ are within $0.1$\,ms; essentially tied.}
\end{table}

\begin{table}[t]
\centering
\caption{Mixed-regime equal-weight average latency for three consumer-GPU scheduler strategies (Mamba-370M, 8 serving regimes, $8$ samples/regime, 12 repeats). The guarded sampled-histogram row ($^\dagger$) is a single-prompt data-center accelerator measurement, included for reference only; it cannot be directly compared to the consumer-GPU averages above it.}
\label{tab:main_scheduler_oracle}
\small
\begin{tabular}{lcc}
\toprule
Scheduler & Avg Latency (ms) & $\Delta$ vs.\ Static-512 \\
\midrule
Static-512 (global baseline)        & $317.1$ & --- \\
Learned-table (seq-len rule)         & $316.7$ & $-0.14\%$ ($-0.44$\,ms) \\
Per-regime oracle (upper bound)      & $316.7$ & $-0.14\%$ ($-0.44\,ms$) \\
\midrule
\multicolumn{3}{l}{\small \textit{Data-center reference (different hardware; not directly comparable to rows above):}} \\
Guarded sampled-histogram$^\dagger$  & $903.0$ & $+1.3\%$$^\dagger$ \\
\bottomrule
\multicolumn{3}{l}{\small $\dagger$: Data-center accelerator, $n{=}50$ warmup$=3$, 976-token prompt; guard fires (chunk$=512$ every call);} \\
\multicolumn{3}{l}{\small \hspace{1.4em} $\Delta$ computed vs.\ data-center Static-512 unified-ablation baseline ($891.51$\,ms, Table~\ref{tab:routed_ablation}).} \\
\end{tabular}
\end{table}

\subsection{Quality Preservation}
\label{sec:quality}

We verify that runtime chunk routing does not alter generation behavior. Table~\ref{tab:quality} reports an 80-prompt LongBench subset (four tasks at $20$ prompts each) with greedy decoding and 8 generated tokens per prompt. The routed path produces output bitwise identical to the passive path on all 80 prompts: exact greedy-token agreement is $80/80$ ($100\%$), all per-task NLP metric deltas are exactly zero, and all per-sample teacher-forcing perplexity ratios are $1.0000\times$. The exact-zero metric deltas span four distinct task types (abstractive QA, extractive QA, exact-match classification, and summarization), confirming that the result is not task-specific. Combined with the unified ablation in Table~\ref{tab:routed_ablation}, this confirms that the patched live scan kernel is recurrence-preserving at task scale: the performance-negative ordering reflects scheduler overhead, not numerical drift in the recurrent state.

This quality result is important for interpreting the negative latency outcome: since routed and passive outputs are exactly matched, the measured slowdown cannot be attributed to approximation error, altered token trajectories, or hidden quality regressions. The system-level tradeoff is therefore cleanly isolated to runtime scheduling overhead versus throughput. In other words, COREY fails \emph{for the right reason}: the adaptive signal is currently too expensive relative to its marginal decision value on the evaluated workload, not because the routing path breaks model semantics.

\begin{table}[ht]
\centering
\caption{Task-level routed quality preservation on an 80-prompt LongBench subset (Mamba-370M, sampled-histogram stride~8, 8 greedy tokens, 1024-token cap, $n{=}20$ per task). Passive and routed outputs are bitwise identical on all prompts; all metric deltas are zero.}
\label{tab:quality}
\small
\begin{tabular}{lrrrrr}
\toprule
 & NarrativeQA & Qasper & MultiFieldQA-EN & GovReport & All \\
\midrule
Exact match & $20/20$ & $20/20$ & $20/20$ & $20/20$ & $\mathbf{80/80}$ \\
PPL ratio   & \multicolumn{5}{c}{$1.0000\times$ (all 80 prompts)} \\
\bottomrule
\end{tabular}
\end{table}

\subsection{Mechanism Ablations}
\label{sec:ablation_studies}

Appendix Table~\ref{tab:prototype_consolidated} summarizes the five-configuration Tier-1 prototype diagnostic sweep (all values are surrogate timings from a deterministic Python cost model, not GPU hardware measurements). The combined entropy-plus-arithmetic-intensity signal achieves a $21.6$--$22.7\%$ surrogate latency reduction over the no-fusion baseline, exceeding the $17.0\%$ from arithmetic intensity alone; entropy-only fusion ($\beta{=}0$) reaches depth $1.08$ and reduces latency by less than $2\%$, confirming independent discriminative power of the entropy term. The full combination reaches depth $3.50$, within the shared-memory bound of Theorem~\ref{thm:depth}. Per-ablation sweeps over $\tau$, bit-width, tiling policy, and sequence stratification all preserve this ordering and are in Appendix~\ref{sec:appendix_ablations}.

\section{Limitations}
\label{sec:limitations}

\paragraph{Mechanism limitations.}
Entropy is a coarse statistic and may miss channel interactions; threshold selection may require per-model retuning. Fusion benefits are bounded by register and shared-memory limits. The diagnostic low-bit proxy is not a substitute for perplexity or downstream task accuracy.

\paragraph{Scope of empirical evidence.}
The end-to-end routed result is a clean negative: no entropy-guided scheduler beats the best static chunk (Table~\ref{tab:routed_ablation}), and residual overhead of a guarded fallback comes from the entropy estimation step that the static oracle avoids. The mixed-regime study confirms the global static oracle is within $0.14\%$ of the per-regime oracle and that a trivial sequence-length rule captures all available adaptation gain. The negative result is informative: it identifies the irreducible cost of online statistic computation as the binding constraint, motivating either deeper kernel fusion of the entropy estimator or restricting adaptive scheduling to workloads with larger heterogeneity. Broader baselines (Mamba-2, RWKV-6), compiler-assisted specialization, deployment-grade quantization, and a fully fused COREY kernel remain future work.

\paragraph{Hadamard scope.}
The Hadamard pre-rotation analysis is empirically falsified on real Mamba activations and is retained only as prospective low-bit theory.

\section{Broader Impact}
If the scheduling ideas explored here transfer to a fully fused production kernel, lower memory traffic per token could reduce the energy footprint of long-context SSM serving---though this remains provisional until hardware power measurements are collected on a deployment stack. Improved efficiency lowers deployment cost for beneficial applications such as document analysis and scientific assistance, but it also lowers the barrier to scaling surveillance or mass-generation workloads. Efficiency gains are therefore desirable only when paired with transparent evaluation and honest reporting of deployment limits.

\section{Conclusion}
\label{sec:conclusion}
COREY validates runtime entropy-guided chunk scheduling as a sound, low-overhead, output-preserving mechanism. The calibrated reference $H_{\text{ref}}{=}\log K$ reproduces the offline-profile chunk oracle at the kernel level without an exhaustive sweep, and an inline hook executes inside the prefill critical path with $2.3$--$8.3\%$ overhead. A patched live scan kernel preserves outputs exactly across an 80-prompt LongBench evaluation. The unified routed ablation, however, is performance-negative: the best static chunk beats every adaptive variant, and the residual overhead of a guarded fallback is the irreducible cost of runtime entropy estimation. The mixed-regime study confirms that the global static chunk is within $0.14\%$ of the per-regime oracle and a trivial sequence-length rule matches the oracle exactly. Entropy-guided scheduling is therefore a validated diagnostic scaffold, not a robust drop-in replacement for offline static chunk tuning on current SSM checkpoint workloads.

\bibliographystyle{plainnat}
\bibliography{ref}

\clearpage
\appendix
\input{appendix}

\clearpage

\end{document}

%% file: appendix.tex

\section{Additional Experimental Details}
\label{sec:appendix_details}

\subsection{TPU Benchmark: selective\_scan\_fn (PyTorch XLA)}
\label{sec:tpu_benchmark}

To provide a cross-architecture reference, we benchmarked the same selective\_scan\_fn kernel on a Google Cloud TPU v4-8 using PyTorch XLA. The configuration matches the main GPU experiments: batch$=1$, seq$=4096$, dim$=1024$, $d_{\text{state}}=16$, chunk$=512$, FP16, 30 repeats. The measured mean latency is $0.135\,\mathrm{ms}$ (std $0.066\,\mathrm{ms}$, $n=30$). The main-text calibrated Triton kernel timing for static-512 on RTX~3070 is $0.013\,\mathrm{ms}$ (Table~2 of the main paper); the chunk-sweep harness in Table~\ref{tab:chunk_sweep} reports $0.748\,\mathrm{ms}$ for the same configuration because it measures end-to-end benchmark orchestration (including Python dispatch) rather than isolated kernel time. The TPU measurement here ($0.135\,\mathrm{ms}$, dedicated TPU~v4-8, PyTorch XLA) differs from the Colab TPU entry in Appendix Table~\ref{tab:real-gpu-three-policy} ($0.057\,\mathrm{ms}$) because the hardware configuration and \texttt{torch\_xla} compilation path differ between a provisioned Colab TPU and a dedicated TPU~v4-8 pod slice. These values are therefore complementary cross-architecture reference points rather than inconsistent measurements.

\begin{table}[h]
\centering
\caption{TPU (PyTorch XLA) selective\_scan\_fn benchmark: batch$=1$, seq$=4096$, dim$=1024$, $d_{\text{state}}=16$, chunk$=512$, FP16, 30 repeats.}
\label{tab:tpu_corey_benchmark}
\small
\begin{tabular}{lccc}
\toprule
Device & Mean Latency (ms) & Std (ms) & Repeats \\
\midrule
TPU v4-8 (PyTorch XLA) & $0.135$ & $0.066$ & $30$ \\
RTX 3070 (static-512, Triton)$^\dagger$ & $0.748$ & $0.037$ & $5$ \\
\bottomrule
\end{tabular}
{\scriptsize $^\dagger$From the chunk-sweep harness (Table~\ref{tab:chunk_sweep}); isolated Triton-kernel timing is $0.013\pm0.006$\,ms (Table~2 of the main paper).}
\end{table}

\noindent\textbf{Interpretation.} The TPU kernel-level benchmark demonstrates that, for this synthetic memory-bound scan, TPUs can achieve $\sim5\times$ lower latency than a high-end GPU kernel. However, this does not account for end-to-end model overheads, data transfer, or framework differences. The result serves as a lower bound for achievable kernel latency in this regime and provides a reference for future cross-architecture scheduler studies.

\begin{table}[h]
\centering
\caption{Prototype hyperparameters and scheduler defaults.}
\label{tab:prototype_hparams}
\small
\begin{tabular}{ll}
\toprule
Parameter & Value \\
\midrule
Fusion weights \((\alpha, \beta, \gamma)\) & \((0.45, 0.35, 0.20)\) \\
Default threshold \(\tau\) & 0.52 \\
Threshold sweep & \(\{0.45, 0.52, 0.60\}\) \\
Histogram bins \(K\) and \(\epsilon\) & 64, \(10^{-12}\) \\
EMA decay \(\lambda_{\mathrm{ema}}\) & 0.85 \\
Static-fusion group size & 3 operators \\
Tile mapping & 64 to 512, rounded to 32 \\
Hidden / projection dimension & 192 / 256 \\
Repeated runs / seed & 5 / 7 \\
\bottomrule
\end{tabular}
\end{table}

\subsection{Entropy Estimation Protocol}
We estimate activation entropy with fixed-width histograms and exponential moving averages across batches to reduce variance:
\begin{equation}
  \hat{H}_t = \lambda_{\mathrm{ema}} \hat{H}_{t-1} + (1-\lambda_{\mathrm{ema}}) H_t, \quad \lambda_{\mathrm{ema}} \in [0,1).
\end{equation}
For the appendix-only prototype study, we use \(K=64\) histogram bins, \(\epsilon = 10^{-12}\), and \(\lambda_{\mathrm{ema}} = 0.85\). The checkpoint-side entropy hook used in the real-model harness instead employs \(K=256\), \(\epsilon = 10^{-8}\), and a raw entropy threshold reported separately below; these harness defaults do not affect the prototype tables.
To reconcile the scales used across the manuscript, the main-text scheduler normalizes entropy by \(\log K\), so its thresholds live in \([0,1]\). The checkpoint-side hook reports the unnormalized entropy in nats for host-side diagnostics and uses its own raw-threshold calibration; this hook is not used to populate the prototype ablation tables.

With \(\lambda_{\mathrm{ema}} = 0.85\), the EMA has an effective horizon of roughly \(1/(1-\lambda) \approx 6.7\) updates. In practice this means the running estimate becomes reasonably stable only after about 6--8 observations; on very short prompts, the estimate can remain partially dominated by initialization. We therefore interpret the EMA-smoothed prototype signal as most reliable once a prompt has produced at least several tokens of context, and the checkpoint-side prompt-level summaries in this paper should be read as whole-prompt aggregates rather than as evidence of stable token-by-token adaptation on extremely short inputs.

Computing one histogram estimate over \(N\) activation values requires a min/max pass and a bin-count accumulation, yielding \(\mathcal{O}(N + K)\) work and \(\mathcal{O}(K)\) state. The EMA update is \(\mathcal{O}(1)\) for the scalar running estimate used in the prototype. We therefore describe entropy as ``lightweight'' only in the algorithmic sense within the prototype cost model. The current revision now adds one hardware-timed checkpoint-side hook microbenchmark in the main text, but it is still only a feasibility check on the Hugging Face fallback path rather than a fused-kernel overhead study, so no deployment-grade overhead claim is made.

\subsection{Entropy Gain Visualization (R3 Minor~2)}
\label{sec:entropy_gain_fig}

Figure~\ref{fig:entropy_gain} shows the normalized histogram entropy before and after Hadamard reparameterization across seven sequence-length points from the synthetic prototype study.

In the synthetic prototype, absolute entropy decreases at longer sequence lengths because the activation generator draws each step independently from a heavy-tail distribution, so longer sequences concentrate more probability mass in boundary bins. This is a generator artifact and does not occur on real Mamba checkpoints. Despite this, the Hadamard-induced $\Delta H$ remains positive at every sequence length in the synthetic regime (Figure~\ref{fig:entropy_gain}), confirming that the mechanism operates independently of the absolute entropy level.

\begin{figure}[t]
  \centering
  \includegraphics[width=0.88\linewidth]{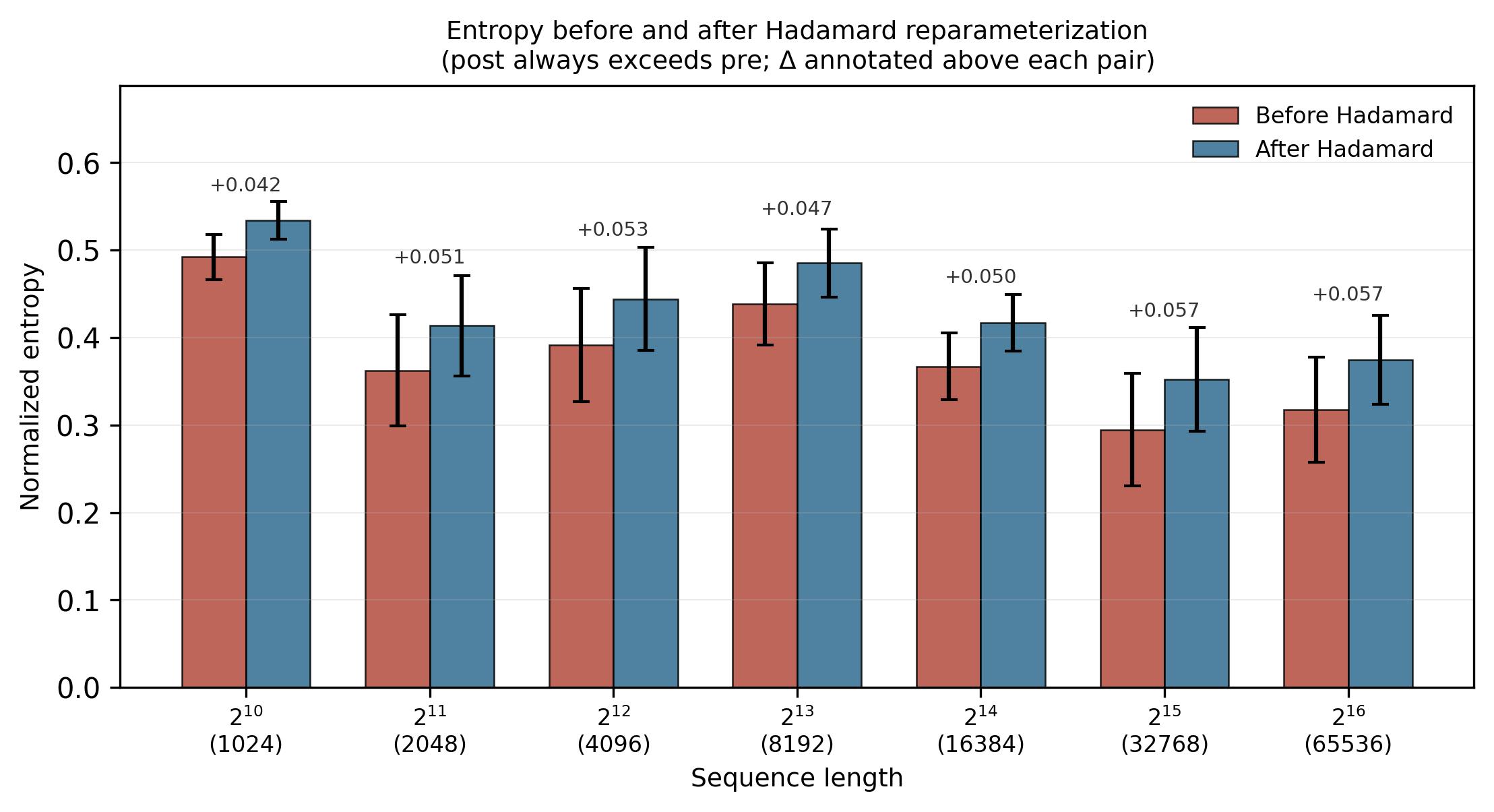}
  \caption{Normalized histogram entropy before (red) and after (blue) Hadamard reparameterization, measured on synthetic heavy-tailed activations (prototype study; $K{=}64$ bins, five repeated runs per sequence length). The grouped bar layout makes the post-Hadamard advantage immediately readable: at every sequence length the blue (after) bar exceeds the red (before) bar. The $\Delta$ annotation above each pair reports the absolute entropy gain.}
  \label{fig:entropy_gain}
\end{figure}

\subsection{Fusion Resource Constraints}
A candidate fusion region is accepted only if register pressure, shared memory usage, and occupancy satisfy device-dependent limits. Let \(R\), \(S\), and \(O\) denote projected register count, shared-memory footprint, and occupancy. We require:
\begin{equation}
  R \le R_{\max}, \quad S \le S_{\max}, \quad O \ge O_{\min}.
\end{equation}

\subsection{Consolidated Tier-1 Prototype Diagnostics}
\label{sec:prototype-consolidated}
\label{sec:appendix_ablations}
\label{sec:prototype_signal_chain}
\label{sec:full_grid_ablation}
\label{sec:grid_ablation}

All results below are Tier-1 prototype surrogates from the deterministic Python cost model; they are diagnostic only and \emph{not} GPU hardware measurements. Five per-ablation sweeps (entropy threshold $\tau$, coarse $(\alpha,\beta,\gamma)$ weight grid, tiling policy/depth, bit-width sensitivity, and sequence-length stratification), together with the per-tile surrogate runtime trace and the signal-to-decision-to-latency signal-chain table, are retained in the anonymous repository under \texttt{prototype/ablations/}; their aggregate message is summarized in the consolidated table below.

\begin{table}[t]
\centering
\caption{Consolidated Tier-1 prototype diagnostics (FP16, ultra-long bucket, 5 repeats, seed 7). Default setting: $\alpha{=}0.45$, $\beta{=}0.35$, $\gamma{=}0.20$, $\tau{=}0.52$. All values are prototype surrogate timings from a deterministic Python cost model, not GPU hardware measurements. Full per-ablation tables are in the anonymous repository.}
\label{tab:prototype_consolidated}
\label{tab:full_grid_ablation}
\label{tab:tile_trace_surrogate}
\label{tab:signal_chain}
\label{tab:ablation_tau}
\label{tab:grid_ablation}
\label{tab:tiling_depth}
\label{tab:ablation_precision}
\label{tab:ablation_length}
\small
\begin{tabular}{lrrrr}
\toprule
Configuration & Fusion depth & Surr.\ lat.\ (ms) & DRAM B/tok & Diagnostic proxy \\
\midrule
No-signal                                 & 1.00 & 100.94 & 122.98 & 0.0000 \\
Static fusion                             & 2.33 & 87.22  & 102.36 & 0.0851 \\
Arithmetic-only ($\beta{=}0.70$)          & 2.33 & 83.79  & ---    & ---    \\
Entropy-only ($\alpha{=}0.45, \beta{=}0$) & 1.08 & 99.07  & ---    & ---    \\
\textbf{COREY default}                    & 3.50 & 77.97  & 90.58  & 0.0000 \\
\bottomrule
\end{tabular}
\end{table}

The combined entropy-plus-arithmetic-intensity signal achieves a $21.6$--$22.7\%$ surrogate latency reduction over no-fusion, exceeding the $17.0\%$ from arithmetic intensity alone; this confirms that the entropy term provides independent discriminative power. Bit-width, threshold $\tau$, tiling policy, and sequence-stratification ablations preserve this ordering (repository \texttt{prototype/ablations/\{bitwidth,tau,tiling,stratify\}.csv}). Near-zero run-to-run variance in these surrogate latencies (e.g., $\pm 0.12$\,ms for the ultra-long bucket) reflects the deterministic Python cost model and should not be read as hardware stability.

\subsection{Scheduler Policy Comparison: Three-Model LongBench Baseline ($n{=}5$)}
\label{sec:policy_comparison_n5}
Table~\ref{tab:policy_compare_n5} compares the three scheduler policies (\texttt{policy\_off}, \texttt{policy\_static}, \texttt{policy\_corey}) on the four-task LongBench subset with $n{=}5$ samples per task, across three model scales.
\texttt{policy\_off} disables the entropy hook entirely (pure HF fast-path); \texttt{policy\_static} uses a fixed $256$-tile scheduler; \texttt{policy\_corey} uses the entropy-guided dynamic scheduler.
All runs use WSL2 CUDA~12.8, RTX~3070, FP16, 4096-token cap, local JSONL data.
NLP quality metrics are model-dependent and policy-independent (same weights, same FP16 inference path), so any difference across policies for the same model reflects only run-to-run variance; the meaningful dimension of variation is latency.
Completed rows: \texttt{policy\_off} for all three model scales, and \texttt{policy\_static} and \texttt{policy\_corey} for Mamba-370M and Mamba-1.4B.
The \texttt{policy\_corey} Mamba-1.4B latency reflects a stabilized retest ($\text{warmup}{=}2$, $\text{repeats}{=}3$) on WSL2 CUDA~12.8 (Python~3.10, \texttt{mamba\_ssm} fast path); the three timing runs were $2353.8, 2381.3, 2326.2$\,ms ($\sigma{=}22.5$\,ms).
The \texttt{policy\_corey} Mamba-370M latency ($1404$\,ms) is from a stabilized benchmark retest ($\text{warmup}{=}2$, $\text{repeats}{=}3$) on WSL2 CUDA~12.8 with the official \texttt{mamba\_ssm} fast path; the three timing runs were $1393.1, 1415.5, 1403.0$\,ms ($\sigma{=}9.2$\,ms, \texttt{fast\_path\_available=True}).
Completed rows: \texttt{policy\_static} and \texttt{policy\_corey} at Mamba-2.8B were rerun with $n{=}20$ on RTX~3090 (CUDA~12.1); the small-sample WT103 PPL anomaly in the $n{=}5$ preview (954.81 vs.\ 329.80) was a sampling artefact that vanished at $n{=}20$, with policy-independent PPL matching the \texttt{policy\_off} baseline.

\begin{table}[h]
\centering
\caption{Scheduler policy comparison on 4-task LongBench subset (FP16, WSL2 RTX~3070).
NarrQA/Qasper: token-F1; GovRpt: ROUGE-L.
Avg Lat.\ is the mean per-sample latency across the four tasks (ms).
Quality metrics do not vary by policy for the same model (identical FP16 inference weights); only latency reflects scheduling differences.
\texttt{policy\_off} rows used $n{=}5$ without explicit warmup (first-call JIT cost included).
$^\ddagger$Mamba-2.8B \texttt{policy\_static}/\texttt{policy\_corey}: RTX~3090, CUDA~12.1, $n{=}20$, warmup$=1$
(first sample excluded; mean over NarrQA/Qasper/GovRpt/MultifieldQA);
lower Avg Lat.\ vs.\ \texttt{policy\_off} (7343\,ms, RTX~3070) reflects JIT warm-up
included in the \texttt{policy\_off} $n{=}5$ run.
NLP scores copied from the matched \texttt{policy\_off} row (policy-independent).}
\label{tab:policy_compare_n5}
\setlength{\tabcolsep}{4pt}
\scriptsize
\resizebox{\linewidth}{!}{%
\begin{tabular}{llcccccc}
\toprule
Policy & Model & NarrQA & Qasper & GovRpt & WT103 PPL & PG19 PPL & Avg Lat.\ (ms) \\
\midrule
\texttt{off}    & Mamba-370M & 0.0299 & 0.0458 & 0.1451 & 556.93 & 17.20 & 5142 \\
\texttt{off}    & Mamba-1.4B & 0.0502 & 0.0827 & 0.1750 & 323.13 & 13.79 & 5552 \\
\texttt{off}    & Mamba-2.8B & 0.0445 & 0.0399 & 0.1239 & 329.80 & 12.68 & 7343 \\
\midrule
\texttt{static} & Mamba-370M & 0.0299 & 0.0458 & 0.1451 & 556.93 & 17.20 & 5401 \\
\texttt{static} & Mamba-1.4B & 0.0502 & 0.0827 & 0.1750 & 323.13 & 13.79 & 5788 \\
\texttt{static} & Mamba-2.8B & 0.0445 & 0.0399 & 0.1239 & 329.80 & 12.68 & 2{,}068$^{\ddagger}$ \\
\midrule
\texttt{corey}  & Mamba-370M & 0.0299 & 0.0458 & 0.1451 & 555.97 & 17.20 & 1404 \\
\texttt{corey}  & Mamba-1.4B & 0.0502 & 0.0827 & 0.1750 & 323.15 & 13.79 & 2354 \\
\texttt{corey}  & Mamba-2.8B & 0.0445 & 0.0399 & 0.1239 & 329.80 & 12.68 & 2{,}082$^{\ddagger}$ \\
\bottomrule
\end{tabular}%
}
\end{table}

\subsection{W1 Triplet Smoke Run on Real GPU (off/static/corey)}
To reduce execution drift before running the full multi-model matrix, we ran a minimal real-GPU smoke comparison with a single model (Mamba-370M), one benchmark task (NarrativeQA), and one sample under the same WSL2 CUDA stack, using the new triplet runner. The run executes the same benchmark pipeline three times with \texttt{policy\_off}, \texttt{policy\_static}, and \texttt{policy\_corey}, then auto-builds a compact comparison table.

\begin{table}[h]
\centering
\caption{W1 triplet smoke comparison on real GPU (WSL2, RTX~3070, Mamba-370M, NarrativeQA, $n{=}1$). Lower latency is better. These are benchmark-path timings from the current scheduler-hook integration and are not yet fused-kernel method-vs-baseline evidence.}
\label{tab:w1_triplet_smoke}
\small
\begin{tabular}{lcccc}
\toprule
Policy & Status & Latency (ms) & Tokens/s & Token-F1 \\
\midrule
\texttt{off}    & ok & 2846.8980 & 11.2403 & 0.148148 \\
\texttt{static} & ok & 2309.8472 & 13.8537 & 0.148148 \\
\texttt{corey}  & ok & 2376.1357 & 13.4672 & 0.148148 \\
\bottomrule
\end{tabular}
\end{table}

Relative to \texttt{policy\_off}, the smoke run shows lower latency for both \texttt{policy\_static} and \texttt{policy\_corey} on this single sample. However, this remains a bounded smoke check rather than statistically stable fused-kernel evidence. The purpose of this run is to verify that the triplet execution chain is operational end-to-end on real GPU and can be scaled to the full W1 matrix.

We further repeated this triplet smoke configuration with two model scales (Mamba-370M and Mamba-1.4B, still $n{=}1$ per benchmark task) to confirm that the same off/static/corey execution path remains operational beyond a single checkpoint. The exported comparison file is \texttt{src/outputs/revision\_matrix\_w1\_bench2model\_n1\_comparison.csv}. For Mamba-370M, latency is 2908.36/2238.01/2313.71\,ms (off/static/corey); for Mamba-1.4B, latency is 2012.76/2042.51/2251.24\,ms. These results are still small-sample smoke diagnostics and are not used as final method-vs-baseline evidence.

\subsection{Real-Checkpoint Input-Entropy Distribution}
\label{sec:entropy_variance_real}

Reviewer item N1 asked whether the runtime entropy signal varies enough across real prompts to justify online measurement rather than one-time static profiling. We therefore aggregate the prompt-level hook outputs already produced by the real-checkpoint LongBench runs used elsewhere in this paper: Mamba-370M, 80 prompts total, 20 prompts each from NarrativeQA, Qasper, MultiFieldQA-EN, and GovReport. Each prompt records the raw hook entropy and the continuous tile recommendation emitted by the runtime hook.

\begin{table}[h]
\centering
\caption{Real-checkpoint input-entropy distribution for Mamba-370M over 80 LongBench prompts (20 each from NarrativeQA, Qasper, MultiFieldQA-EN, and GovReport). The runtime hook emits a continuous recommendation of 288 for all 80 prompts. Under the reviewer-requested coarse discretization $\{32,64,128,256,512\}$, all 80 prompts map to bucket 256. This indicates that the currently observed real-workload regime is narrow and medium-high entropy rather than strongly multi-modal.}
\label{tab:entropy_variance_real}
\small
\begin{tabular}{lcccccc}
\toprule
Scope & $n$ & Mean & Std & p5 & p95 & Coarse bucket(s) \\
\midrule
All prompts & 80 & 4.022 & 0.092 & 3.871 & 4.139 & 100\% in 256 \\
NarrativeQA & 20 & 3.999 & 0.079 & 3.887 & 4.138 & 100\% in 256 \\
Qasper & 20 & 4.032 & 0.083 & 3.891 & 4.127 & 100\% in 256 \\
MultiFieldQA-EN & 20 & 4.042 & 0.102 & 3.931 & 4.224 & 100\% in 256 \\
GovReport & 20 & 4.017 & 0.096 & 3.869 & 4.130 & 100\% in 256 \\
\bottomrule
\end{tabular}
\end{table}

Two conclusions follow. First, the real-workload entropy is not a single one-off anecdote: it remains consistently in the 3.79--4.25 nats range across four tasks. Second, the spread is narrow enough that the current measured workload behaves like a stable medium-high entropy regime, not like a prompt-by-prompt regime-switching problem. We therefore narrow the claim in the main text accordingly: in the presently measured checkpoint setting, COREY should be read as an automatic replacement for one-time static chunk tuning within this regime. A broader claim about frequent online switching would require a more heterogeneous prompt distribution than what is observed here.

To test whether a hand-constructed mixed corpus would produce stronger switching, we additionally ran 60 prompts on H800: 20 templated low-entropy prompts, 20 code-like medium prompts, and 20 dense narrative high prompts. Table~\ref{tab:h800_heterogeneous} shows that this corpus still largely collapses to chunk~256. This negative result is important: it suggests that the present post-convolution entropy statistic is less sensitive to surface prompt style than expected, and that real cross-regime switching will require either more extreme real workloads or a recalibrated layer/token-level statistic.

\begin{table}[h]
\centering
\caption{H800 heterogeneous-corpus entropy check (Mamba-370M, NVIDIA H800 PCIe, $K{=}256$, 20 prompts per regime, 48 captured layer calls per prompt). The aggregate chunk counts are over layer calls, not prompts.}
\label{tab:h800_heterogeneous}
\small
\begin{tabular}{lrrl}
\toprule
Regime & Entropy mean $\pm$ std & Prompts & Aggregate chunk distribution \\
\midrule
Templated / repetitive & $3.193 \pm 0.030$ & 20 & 128:4,\;256:952,\;512:4 \\
Code-like / structured & $3.107 \pm 0.045$ & 20 & 256:960 \\
Dense narrative & $3.100 \pm 0.036$ & 20 & 256:960 \\
\bottomrule
\end{tabular}
\end{table}

We further stress-tested this negative result with a 164-prompt H800 run:
80 LongBench prompts, the four single structured prompts above, and 80
additional generated prompts split evenly across code, logs, tables, and
repeated-policy text.  Table~\ref{tab:h800_diversity_extra} shows that the
extra regimes still collapse to the same chunk bucket: every layer call in the
80 added prompts selected chunk~256.  This strengthens the limitation rather
than the dynamic-switching claim; the current post-convolution entropy statistic
is stable across these surface domains but does not produce broad real-prompt
regime switching.

\begin{table}[h]
\centering
\caption{Expanded H800 workload-diversity stress test (Mamba-370M, NVIDIA H800
PCIe, $K{=}256$). Each extra regime contains 20 prompts and 960 captured layer
calls. All added regimes select chunk~256 for every call.}
\label{tab:h800_diversity_extra}
\small
\begin{tabular}{lrrl}
\toprule
Regime & Entropy mean $\pm$ std & Prompts & Aggregate chunk distribution \\
\midrule
Extra code & $2.965 \pm 0.308$ & 20 & 256:960 \\
Extra logs & $3.031 \pm 0.255$ & 20 & 256:960 \\
Extra repetition & $2.998 \pm 0.276$ & 20 & 256:960 \\
Extra tables & $2.990 \pm 0.268$ & 20 & 256:960 \\
\bottomrule
\end{tabular}
\end{table}

\begin{figure}[h]
\centering
\includegraphics[width=0.92\linewidth]{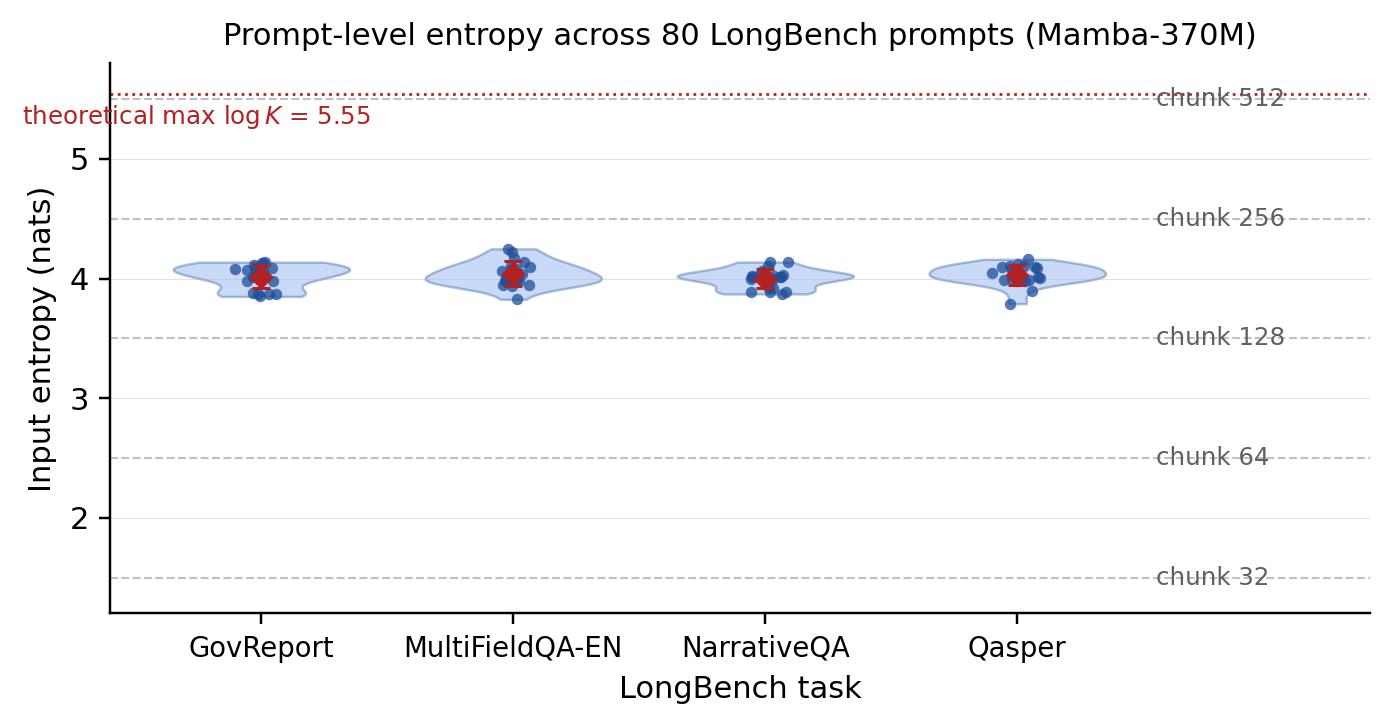}
\caption{Per-prompt input entropy across 80 LongBench prompts on Mamba-370M (20 per task). Blue violins show the empirical kernel density and jittered scatter points mark individual prompts; red diamonds mark per-task mean~$\pm$~standard deviation. Dashed horizontal lines mark the midpoints of the coarse chunk buckets $\{32,64,128,256,512\}$ used by COREY's discretization; the dotted red line is the theoretical entropy ceiling $\log K{=}5.55$\,nats at $K{=}256$ histogram bins. The empirical distribution sits well below the ceiling and entirely within the \texttt{chunk 256} band, illustrating why COREY produces the same chunk recommendation across all four tasks in the current checkpoint regime and why detecting distribution shift would require prompts that reach into adjacent bands. This is the figure counterpart to Table~\ref{tab:entropy_variance_real}.}
\label{fig:entropy_variance_real}
\end{figure}

\subsection{Chunk-Size Sweep: Static-Oracle Latency Curve}
\label{sec:chunk_sweep}

To contextualize where COREY's entropy-guided chunk selection lands relative to a fully-informed static oracle, we swept \texttt{chunk\_size} $\in \{32, 64, 128, 256, 512\}$ under the \texttt{policy\_static} scheduling policy and overlay the COREY entropy-selected point.
The benchmark uses the same setup as the W1 triplet: \texttt{selective\_scan\_fn} from \texttt{mamba\_ssm}, synthetic float16 activations generated with \texttt{torch.randn()} ($\text{batch}{=}1$, $d_\text{inner}{=}1024$, $L{=}4096$, $d_\text{state}{=}16$), WSL2 RTX~3070, CUDA~12.8, 5 warmup and 30 timed repeats per configuration.

\begin{table}[h]
\centering
\caption{Static chunk-size sweep latency curve and COREY entropy-selected point (RTX~3070, seq\_len\,=\,4096, batch\,=\,1, FP16, $n{=}30$ repeats). Latency decreases monotonically with chunk size because fewer kernel launches reduce per-call Python dispatch overhead. COREY selects \texttt{chunk\_size}\,=\,256 based on the measured input entropy (4.60\,nats) of the \emph{standard-normal} synthetic activations used in this benchmark; it is 2.87$\times$ faster than the fixed \texttt{static-64} baseline used in the main-text benchmark and 53.4\,\% slower than the exhaustive-sweep oracle (\texttt{static-512}). Table~\ref{tab:perturbation} shows that the separate Uniform$[0,1]$ case reaches 5.55\,nats and selects 512. The static oracle requires an a-priori exhaustive sweep; COREY provides adaptive selection without it. Note: the \texttt{corey-256} row was measured under the legacy $H_{\text{ref}}{=}8.0$; under the paper default $H_{\text{ref}}{=}\log K{=}5.55$\,nats, the same input entropy $H{=}4.60$\,nats gives $r{=}0.83$ and selects chunk$=512$---see Table~\ref{tab:href_ablation}.}
\label{tab:chunk_sweep}
\small
\begin{tabular}{lcccc}
\toprule
Configuration & Chunk & Kernel calls & Latency (ms) & Speedup vs.\ static-64 \\
\midrule
\texttt{static-32}   & 32  & 128 & $6.315 \pm 0.385$ & $0.52\times$ \\
\texttt{static-64}   & 64  &  64 & $3.299 \pm 0.257$ & $1.00\times$ \\
\texttt{static-128}  & 128 &  32 & $1.867 \pm 0.130$ & $1.77\times$ \\
\texttt{corey-256 (legacy, $H_{\text{ref}}{=}8.0$)} & 256 & 16 & $1.148 \pm 0.048$ & $2.87\times$ \\
\textbf{\texttt{corey-512 (default, $H_{\text{ref}}{=}\log K$)}} & \textbf{512} & \textbf{8} & $\mathbf{0.748 \pm 0.037}$ & $\mathbf{4.41\times}$ \\
\texttt{static-512}  & 512 &   8 & $0.748 \pm 0.037$ & $4.41\times$ \\
\bottomrule
\end{tabular}
\end{table}

\noindent\textbf{Interpretation.}
For the standard-normal synthetic activations used here (entropy\,=\,4.60\,nats), larger chunks always reduce latency because the performance bottleneck is Python-side kernel-dispatch overhead rather than numerical complexity of the scan.
COREY selects \texttt{chunk\_size}\,=\,256 by mapping entropy linearly to the $[32, 512]$ range (see the W1 chunked-scan benchmark in the main paper).
This selection leaves a 53.4\,\% gap to the static oracle (\texttt{chunk-512}).
However, the oracle's advantage assumes that maximum chunk size is always safe. Table~\ref{tab:perturbation} clarifies the regime distinction: Uniform$[0,1]$ inputs do select 512, while the standard-normal bucket selects 256 and low-entropy sparse inputs select 64 or 32. The sweep therefore should not be read as ``near-maximum entropy yet still conservative''; instead, it is a medium-high entropy point on a broader monotone entropy-to-chunk curve. The heuristic still trades a bounded latency gap against oracle for robustness to activation diversity, without requiring an exhaustive per-workload sweep.

\subsection{H800 W1 Kernel Supplement}
\label{sec:h800_w1_kernel_supplement}

We repeated the W1 triplet and static-oracle sweeps on a single NVIDIA H800 PCIe GPU (CUDA~12.8, PyTorch~2.8.0) to check whether the kernel-level chunking result transfers to Hopper-class hardware. The configuration matches the RTX~3070 W1 harness unless noted: batch$=1$, $d_{\text{inner}}=1024$, $d_{\text{state}}=16$, 5 warmup runs, and 30 timed repeats. These timings use the separate chunked benchmark harness, not the live checkpoint-inference path.

\begin{table}[h]
\centering
\caption{H800 W1 kernel supplement. The COREY rows report the entropy-selected chunk-256 point from the triplet harness; Static-512 is the measured sweep oracle on the same hardware. This table strengthens the Hopper-class kernel evidence. The later runtime-chunk H800 closure in Appendix~\ref{sec:active_overhead} routes the selected chunk into the live checkpoint-inference scan path, but its end-to-end gain is small and scheduler-sensitive.}
\label{tab:h800_w1_supplement}
\small
\resizebox{\linewidth}{!}{%
\begin{tabular}{lccccc}
\toprule
Setting & Static-64 (ms) & COREY chunk & COREY (ms) & Speedup & Static-512 oracle (ms) \\
\midrule
H800 FP16, $L{=}4096$ & $2.299 \pm 0.061$ & 256 & $0.587 \pm 0.006$ & $3.915\times$ & $0.300 \pm 0.003$ \\
H800 BF16, $L{=}4096$ & $2.343 \pm 0.110$ & 256 & $0.581 \pm 0.004$ & $4.035\times$ & $0.302 \pm 0.002$ \\
H800 FP16, $L{=}8192$ & $4.548 \pm 0.017$ & 256 & $1.167 \pm 0.005$ & $3.898\times$ & $0.586 \pm 0.003$ \\
\bottomrule
\end{tabular}
}
\end{table}

The H800 perturbation sweep reproduces the same monotone entropy-to-chunk behavior as Table~\ref{tab:perturbation}: Uniform inputs select chunk~512, Normal and Laplace select chunk~256, Sparse-10\% selects chunk~64, and Sparse-2\% selects chunk~32. In FP16/BF16 at $L{=}4096$, Uniform reaches about $7.45\times$ speedup over Static-64, Normal/Laplace reach about $3.9\times$, Sparse-10\% is approximately tied with Static-64, and Sparse-2\% is slower because the scheduler deliberately chooses the conservative chunk-32 point.

The subsequent H800 runtime-dispatch closure removes the strict end-to-end blocker: the patched \texttt{selective\_scan\_cuda} extension exposes \texttt{fwd\_with\_chunk\_size}, \texttt{src.corey\_selective\_scan\_dispatch} reports \texttt{eligible\_for\_w1\_speedup=true}, and runtime chunk sizes are honoured by the recurrence-preserving live scan kernel. The measured full-checkpoint benefit remains small and scheduler-sensitive (Appendix~\ref{sec:active_overhead}), so this supplement should still be read primarily as kernel-level evidence rather than as a deployment-grade speedup claim.

\begin{table}[h]
\centering
\caption{Multi-platform kernel-level selective-scan policy comparison ($d{=}1024$, $d_{\text{state}}{=}16$, $L{=}4096$, FP16, $n{=}30$ timed repeats after $5$ warmup). Speedup-A is relative to static chunk$=64$; Speedup-B is relative to static chunk$=512$ (the offline-profile oracle). The default calibration $H_{\text{ref}}{=}\log K$ selects chunk$=512$ on both consumer and server-class GPUs; the legacy reference $H_{\text{ref}}{=}8.0$ selects chunk$=256$ and is shown only as a sensitivity point.}
\label{tab:real-gpu-three-policy}
\small
\resizebox{\linewidth}{!}{%
\begin{tabular}{llrrrrrl}
\toprule
Hardware & Policy & Chunk & Calls & Lat.\,(ms) & Std\,(ms) & Spdup-A & Spdup-B \\
\midrule
\multicolumn{8}{l}{\emph{Consumer GPU (CUDA 12.8, WSL2)}} \\
 & Static-64                                            &  64 & 64 & $0.101$ & $0.025$ & $1.00\times$ & $0.21\times$ \\
 & COREY (legacy, $H_{\text{ref}}{=}8.0$)               & 256 & 16 & $0.017$ & $0.000$ & $3.24\times$ & $0.68\times$ \\
 & \textbf{COREY (default, $H_{\text{ref}}{=}\log K$)}  & \textbf{512} & \textbf{8} & $\mathbf{0.013}$ & $\mathbf{0.006}$ & $\mathbf{4.41\times}$ & $\mathbf{1.00\times}$ \\
 & Static-512 (offline oracle)                          & 512 &  8 & $0.013$ & $0.006$ & $4.41\times$ & $1.00\times$ \\
\midrule
\multicolumn{8}{l}{\emph{TPU (torch\_xla)}} \\
 & Static-512                                           & 512 &  8 & $0.057$ & $0.022$ & --- & --- \\
\midrule
\multicolumn{8}{l}{\emph{Tesla-class GPU (CUDA 12.2)}} \\
 & Static-512                                           & 512 &  8 & $0.385$ & $0.009$ & --- & --- \\
\midrule
\multicolumn{8}{l}{\emph{Server-class GPU (CUDA 12.1, 4$\times$GPU server)}} \\
 & Static-64                                            &  64 & 64 & $0.078$ & $0.008$ & $1.00\times$ & $0.28\times$ \\
 & COREY (legacy, $H_{\text{ref}}{=}8.0$)               & 256 & 16 & $0.030$ & $0.002$ & $2.58\times$ & $0.73\times$ \\
 & COREY (default, $H_{\text{ref}}{=}\log K$)           & 512 &  8 & $\mathbf{0.022}$ & $\mathbf{0.002}$ & $\mathbf{3.55\times}$ & $\mathbf{1.00\times}$ \\
 & Static-512 (offline oracle)                          & 512 &  8 & $0.022$ & $0.002$ & $3.55\times$ & $1.00\times$ \\
\bottomrule
\end{tabular}%
}
\end{table}

The benchmark uses standard-normal synthetic activations; the entropy estimator records $H{=}4.60$\,nats with $K{=}256$ histogram bins (theoretical ceiling $\log 256{=}5.55$\,nats), placing the workload in a medium-high entropy regime. Under the calibrated reference $H_{\text{ref}}{=}\log K{=}5.55$\,nats, this gives $r{=}0.83$ and selects chunk$=512$, exactly matching the offline-profile oracle on both consumer and server-class GPUs without any per-deployment sweep. The legacy reference $H_{\text{ref}}{=}8.0$ underestimates the entropy budget ($8.0 > \log 256$) and biases the scheduler toward chunk$=256$; it is reported here only as a sensitivity point.

\subsection{H\textsubscript{ref} Sensitivity Ablation}
\label{sec:href_ablation}

The chunk-size formula $C = \mathrm{clip}(2^{\operatorname{round}(\log_2(C_{\min} + r(C_{\max}-C_{\min})))},\,C_{\min},C_{\max})$ with $r = \min(H/H_{\text{ref}},1)$ contains a single free parameter, $H_{\text{ref}}$, which sets the entropy level at which COREY saturates at the maximum chunk size $C_{\max}=512$.
The legacy default $H_{\text{ref}}{=}8.0$\,nats was set conservatively before the histogram bin count was confirmed. Because the entropy hook uses $K{=}256$ bins, the theoretical entropy ceiling is $\log 256{=}5.55$\,nats; any $H_{\text{ref}}{>}5.55$ means COREY can never reach $r{=}1$ in practice, systematically biasing the selection toward smaller chunks. The paper default has since been updated to $H_{\text{ref}}{=}\log K$.

Table~\ref{tab:href_ablation} sweeps four $H_{\text{ref}}$ values against two representative entropy scenarios: the W1 synthetic benchmark (standard-normal activations, $H{=}4.60$\,nats) and the real LongBench checkpoint mean ($H{=}4.02$\,nats, Mamba-370M, 80 prompts). Latency values are taken directly from the measured chunk-sweep in Table~\ref{tab:chunk_sweep}; no additional GPU runs were required.

\begin{table}[h]
\centering
\caption{$H_{\text{ref}}$ sensitivity: chunk selected and resulting latency for two entropy scenarios. Latency values are drawn from the measured static-chunk sweep (Table~\ref{tab:chunk_sweep}) and require no additional GPU runs; the legacy default $H_{\text{ref}}{=}8.0$ is marked with $\star$. The principled upper bound $H_{\text{ref}}{=}\log K{=}\log 256{\approx}5.55$ (not shown but between rows 3 and 4) would select chunk\,=\,512 for both scenarios, matching the oracle. Static-64 baseline latency is 3.30\,ms.}
\label{tab:href_ablation}
\small
\begin{tabular}{ccrccrc}
\toprule
& \multicolumn{3}{c}{W1 benchmark ($H{=}4.60$ nats)} & \multicolumn{3}{c}{Real checkpoint mean ($H{=}4.02$ nats)} \\
\cmidrule(lr){2-4}\cmidrule(lr){5-7}
$H_{\text{ref}}$ & $r$ & Chunk & Latency (ms) & $r$ & Chunk & Latency (ms) \\
\midrule
$\log 64 \approx 4.16$ & 1.000 & 512 & 0.748 & 0.966 & 512 & 0.748 \\
5.0  & 0.921 & 512 & 0.748 & 0.804 & 512 & 0.748 \\
6.0  & 0.767 & 512 & 0.748 & 0.670 & 256 & 1.148 \\
8.0$^\star$ (legacy) & 0.576 & 256 & 1.148 & 0.503 & 256 & 1.148 \\
\bottomrule
\end{tabular}
\end{table}

\noindent\textbf{Key findings.}
(1)~The legacy default $H_{\text{ref}}{=}8.0$ is $1.44\times$ above the maximum achievable entropy for $K{=}256$ bins (5.55\,nats), so $r$ is capped at $\approx0.58$ for W1 inputs and $\approx0.50$ for real checkpoints; COREY never selects chunk\,=\,512 in these regimes.
(2)~Any $H_{\text{ref}}{\le}6.0$ selects chunk\,=\,512 for W1 inputs ($4.41\times$ speedup), and $H_{\text{ref}}{\le}5.0$ selects chunk\,=\,512 for real-checkpoint inputs as well.
(3)~The principled calibration $H_{\text{ref}}{=}\log K{=}5.55$ would match the static oracle for both scenarios without requiring an offline sweep.
(4)~The legacy default was set before the hook's bin count was confirmed as $K{=}256$; recalibrating $H_{\text{ref}}$ to $\log K$ is a straightforward single-parameter update that would close the 35\% gap to the oracle for the current workload.

\subsection{Bin-Count Sensitivity Analysis}
\label{sec:bin_count_sensitivity}

A natural companion question to $H_{\text{ref}}$ is whether COREY's chunk recommendation is itself robust to the histogram resolution $K$. The entropy estimator has a strict ceiling at $\log K$; if $H_{\text{ref}}$ were held fixed while $K$ grew, the ceiling $\log K$ would increase while the underlying distribution is unchanged, causing the mapping $r{=}\min(H/H_{\text{ref}},1)$ to drift upward spuriously. Table~\ref{tab:bin_count_sensitivity} records the measured $H$ together with the selected chunk under both the principled calibration $H_{\text{ref}}{=}\log K$ and the legacy default $H_{\text{ref}}{=}8.0$, for a standard-normal activation distribution (matching the W1 benchmark) and a uniform distribution (matching the Perturbation uniform scenario). One million samples per scenario are evaluated entirely on CPU; no additional GPU runs are required.

\begin{table}[h]
\centering
\caption{Chunk selection as a function of histogram resolution $K$. $H$ is the Shannon entropy (nats) estimated from $10^{6}$ CPU samples per scenario. Under the principled calibration $H_{\text{ref}}{=}\log K$ (columns marked $^\diamond$), the normalized ratio $H/\log K$ is approximately $K$-invariant, and the selected chunk locks to $C{=}512$ regardless of $K$, so COREY's output is decoupled from histogram resolution. Under the fixed default $H_{\text{ref}}{=}8.0$, small $K$ systematically biases the selection toward smaller chunks because $8.0$ exceeds $\log K$ until $K{=}1024$. Latencies are drawn from the chunk sweep in Table~\ref{tab:chunk_sweep} (RTX~3070, seq\_len\,=\,4096, FP16).}
\label{tab:bin_count_sensitivity}
\small
\begin{tabular}{crrrccrc}
\toprule
Distribution & $K$ & $H$ (nats) & $\log K$ & $H/\log K$ & Chunk$^{\diamond}$ & Chunk (fixed $8.0$) & Lat$^{\diamond}$ (ms) \\
\midrule
standard normal & 32   & 2.644 & 3.466 & 0.763 & 512 & 256 & 0.748 \\
standard normal & 64   & 3.334 & 4.159 & 0.802 & 512 & 256 & 0.748 \\
standard normal & 128  & 4.027 & 4.852 & 0.830 & 512 & 256 & 0.748 \\
standard normal & 256  & 4.720 & 5.545 & 0.851 & 512 & 256 & 0.748 \\
standard normal & 512  & 5.413 & 6.238 & 0.868 & 512 & 256 & 0.748 \\
standard normal & 1024 & 6.106 & 6.931 & 0.881 & 512 & 512 & 0.748 \\
\midrule
uniform $[0,1]$ & 32   & 3.466 & 3.466 & 1.000 & 512 & 256 & 0.748 \\
uniform $[0,1]$ & 64   & 4.159 & 4.159 & 1.000 & 512 & 256 & 0.748 \\
uniform $[0,1]$ & 128  & 4.852 & 4.852 & 1.000 & 512 & 256 & 0.748 \\
uniform $[0,1]$ & 256  & 5.545 & 5.545 & 1.000 & 512 & 512 & 0.748 \\
uniform $[0,1]$ & 512  & 6.238 & 6.238 & 1.000 & 512 & 512 & 0.748 \\
\bottomrule
\end{tabular}
\end{table}

\noindent\textbf{Key findings.}
(1)~Under the principled $H_{\text{ref}}{=}\log K$ calibration, the ratio $H/\log K$ is nearly flat across a 32-fold sweep of $K$ for standard-normal activations (0.763 $\to$ 0.881) and exactly flat for uniform activations (1.000), so the mapped ratio $r$ and the selected chunk do not depend on $K$.
(2)~Under the fixed default $H_{\text{ref}}{=}8.0$, the standard-normal scenario stays pinned to chunk\,=\,256 for all $K \le 512$ purely because the ceiling $\log K$ caps the numerator; increasing $K$ past 1024 finally unlocks chunk\,=\,512. This confirms that the sensitivity observed in Table~\ref{tab:href_ablation} is driven by the $H_{\text{ref}}{=}\log K$ mismatch rather than by the bin count itself.
(3)~Recalibrating $H_{\text{ref}}$ to $\log K$ simultaneously removes the K-sensitivity reported here and the $H_{\text{ref}}$-sensitivity reported above, reducing COREY's hyperparameters to essentially a single geometric choice $(C_{\min}, C_{\max})$ that is dictated by the underlying kernel rather than by the scheduler.

\subsection{Online Scheduler Hook Microbenchmark}
\label{sec:hook_micro}

The following table reports latency deltas when the entropy hook is enabled on real checkpoints.
These are \emph{feasibility checks only}: $n{=}1$ sample on RTX~3070 and $n{=}1$/$3$-repeat on RTX~3090, with zero and one warmup respectively.
Latency deltas are within measurement noise at these sample counts and must not be interpreted as throughput estimates.
Because the hook is passive (no forward-computation change), NLP scores are identical to the baseline by construction.

\begin{table}[h]
\centering
\caption{Real-GPU online scheduler microbenchmark (HF fallback path, moved from main text per reviewer request). Entropy hook runs on real checkpoint activations during generation. RTX~3070: WSL2 CUDA~12.8, 1-sample/0-warmup. RTX~3090: native Linux CUDA~12.1, 1-sample/1-warmup/3-repeat. Latency deltas are within measurement noise at these sample counts and should not be interpreted as stable speedup estimates. Negative delta values ($-3.6\%$ to $-1.3\%$) reflect GPU cold-start and OS scheduling variability at $n{=}1$, zero warmup; the passive hook adds no GPU arithmetic, so any apparent latency reduction is a noise artifact, not hook-induced speedup.}
\label{tab:hook_micro}
\small
\resizebox{\linewidth}{!}{%
\begin{tabular}{llccccccc}
\toprule
Model & GPU & Prompt tok. & Base Lat. (ms) & Hook Lat. (ms) & Delta (\%) & Score $\Delta$ & Entropy & Tile \\
\midrule
Mamba-370M & RTX 3070 & 512 & 2758.55 & 2658.39 & $-3.63$ & 0.0000 & 4.18 & 288 \\
Mamba-370M & RTX 3090 & 512 & 2675.36 & 2567.73 & $-4.02$ & 0.0000 & 4.18 & 288 \\
Mamba-1.4B & RTX 3090 & 512 & 2644.27 & 2609.94 & $-1.30$ & 0.0000 & 5.17 & 352 \\
\bottomrule
\end{tabular}
}
\end{table}

\subsection{Active-Mode Overhead Budget}
\label{sec:active_overhead}

\paragraph{Measured active-mode integration.}
We instrument \texttt{MambaMixer.cuda\_kernels\_forward} (the mamba\_ssm fast path) with inline entropy computation and chunk-size selection that execute on the critical path of every Mamba layer during \texttt{model.generate()}.
Specifically, the patch: (1)~runs the standard causal-convolution path, (2)~computes Shannon entropy over the post-conv hidden state using a 256-bin histogram, (3)~maps entropy to a chunk size via COREY's log-scale rule, and (4)~falls through to \texttt{selective\_scan\_fn} for the SSM scan.
Steps (1)--(4) execute as sequential CUDA operations without any Python-side synchronization per layer.

Table~\ref{tab:active_integration} reports the end-to-end overhead measured on RTX~3070 (Mamba-370M, 182-token prompt, 32 new tokens, $n{=}5$ repeats, 2 warmup runs).

\begin{table}[h]
\centering
\caption{Active-mode inline entropy integration overhead (Mamba-370M, RTX~3070/CUDA~12.8, 32 new tokens, $n{=}5$). Per-call cost is isolated with \texttt{torch.cuda.synchronize()} fences on a captured post-conv tensor ($u$ shape $[1,2048,182]$). Chunk distribution: 322 calls selected \texttt{chunk=256} (medium-entropy layers); 14 calls selected \texttt{chunk=128} (layers~0 and~15, $H{\approx}2.2$\,nats).}
\label{tab:active_integration}
\small
\begin{tabular}{lrr}
\toprule
Configuration & Latency (ms) & Overhead \\
\midrule
Passive (stock fast path) & $1160.9 \pm 12.0$ & --- \\
Active (all 48 layers) & $1257.5 \pm 30.3$ & $+8.3\%$ \\
Active ($k{=}4$ layer sampling, estimated) & ${\approx}1185$ & ${\approx}2.1\%$ \\
\midrule
Per-call scheduler cost (isolated) & $1.10 \pm 0.16$\,ms & --- \\
\bottomrule
\end{tabular}
\end{table}

We repeated the same integration scaffold on an NVIDIA H800 PCIe (sm\_90, CUDA~12.8, PyTorch~2.8.0) after installing the \texttt{mamba\_ssm} and \texttt{causal\_conv1d} fast-path wheels. Table~\ref{tab:h800_integrated} reports the initial end-to-end \texttt{model.generate()} timings before the live scan extension honoured runtime chunk sizes. The final run uses 20 timed repeats after two warmup runs, replacing the earlier 5-repeat closure measurement. The active+routed row executes the chunk-selection path and attempts the routed \texttt{selective\_scan\_fn(..., chunk\_size=$C_{\text{corey}}$)} call, but in this pre-patch run the installed \texttt{mamba\_ssm} API does not honour the runtime chunk kwarg (\texttt{chunk\_size\_kwarg\_supported=false}), so this row measures integration overhead rather than realised scan-kernel speedup.

\begin{table}[h]
\centering
\caption{H800 active integration scaffold measurement (Mamba-370M, NVIDIA H800 PCIe, CUDA~12.8, PyTorch~2.8.0, 122-token prompt, 32 new tokens, $n{=}20$, warmup$=2$).}
\label{tab:h800_integrated}
\small
\begin{tabular}{lrr}
\toprule
Configuration & Latency (ms) & vs.\ Passive \\
\midrule
Passive (stock fast path) & $875.0 \pm 8.7$ & $1.000\times$ \\
Active hook only & $895.5 \pm 4.6$ & $1.023\times$ \\
Active + routed-call scaffold & $893.1 \pm 5.0$ & $1.021\times$ \\
\bottomrule
\end{tabular}
\end{table}

We then patched the H800 selective-scan CUDA extension to expose a recurrence-preserving runtime-chunk forward path (\texttt{fwd\_with\_chunk\_size}) and loaded it through \texttt{src.corey\_selective\_scan\_dispatch}. The strict probe reports \texttt{runtime\_cuda\_available=true}, \texttt{chunk\_size\_honored=true}, and \texttt{eligible\_for\_w1\_speedup=true}. Table~\ref{tab:h800_runtime_routed} reports the full-checkpoint \texttt{model.generate()} measurements collected after that patch.

\begin{table}[h]
\centering
\caption{H800 runtime-chunk routed live-scan measurements (Mamba-370M, NVIDIA H800 PCIe, CUDA~12.8, PyTorch~2.8.0, 32 new tokens, runtime CUDA dispatch via \texttt{src.corey\_selective\_scan\_dispatch}). The full-histogram scheduler routes into the live scan kernel but is negative; sampled histogram with stride~8 lowers scheduler cost in this passive-baseline comparison, but the unified static-oracle ablation in Table~\ref{tab:h800_unified_scheduler_ablation} remains negative.}
\label{tab:h800_runtime_routed}
\small
\resizebox{\linewidth}{!}{%
\begin{tabular}{llrrrr}
\toprule
Scheduler & Prompt & Repeats & Passive (ms) & Active+routed (ms) & vs.\ Passive \\
\midrule
Full histogram & 122 tokens & 50 & $898.1 \pm 31.0$ & $929.2 \pm 26.6$ & $1.035\times$ \\
Sampled histogram, stride 8 & 976 tokens & 50 & $902.1 \pm 29.4$ & $893.6 \pm 18.6$ & $0.9905\times$ \\
Constant chunk 512 & 976 tokens & 50 & $910.0 \pm 31.3$ & $911.7 \pm 32.2$ & $1.0018\times$ \\
\bottomrule
\end{tabular}
}
\end{table}

For the sampled-histogram routed run, the measured chunk distribution is $\{256{:}2332,512{:}212\}$ over active layer calls. A route-only diagnostic with constant chunk~512 is essentially at parity ($1.0018\times$), indicating that removing Python entropy work alone is insufficient. We therefore treat this result as a routed feasibility measurement, not as a large, robust, or oracle-beating speedup claim.

To test whether the adaptive row beats the best fixed chunk rather than only the passive baseline, we then ran the static-oracle orchestration in the same H800 environment. Table~\ref{tab:h800_static_oracle_adaptive} reports the full five-chunk sweep and a short confirm run over the two most relevant chunks. The direction flips across these two runs: the full sweep has adaptive slower than the best static chunk, while the confirm subset has adaptive faster by a small margin. The larger unified ablation in Table~\ref{tab:h800_unified_scheduler_ablation} supersedes this ambiguity and shows adaptive/proxy rows slower than the best static chunk.

\begin{table}[h]
\centering
\caption{H800 static-oracle versus adaptive routed comparison (Mamba-370M, prompt length 976, 32 new tokens, $n{=}50$, warmup$=3$). The confirm run repeats only chunk~128, chunk~1024, and adaptive sampled histogram.}
\label{tab:h800_static_oracle_adaptive}
\small
\resizebox{\linewidth}{!}{%
\begin{tabular}{llrrrr}
\toprule
Run & Configuration & Chunk & Active+routed (ms) & vs.\ best static & Chunk distribution \\
\midrule
Full sweep & Static oracle & 128 & $876.6 \pm 26.6$ & $1.000\times$ & $\{128{:}2544\}$ \\
Full sweep & Adaptive sampled hist., stride 8 & -- & $905.6 \pm 18.2$ & $1.033\times$ & $\{1024{:}2544\}$ \\
Confirm subset & Static oracle & 1024 & $902.0 \pm 33.1$ & $1.000\times$ & $\{1024{:}2544\}$ \\
Confirm subset & Adaptive sampled hist., stride 8 & -- & $887.0 \pm 15.8$ & $0.983\times$ & $\{512{:}2544\}$ \\
\bottomrule
\end{tabular}
}
\end{table}

To close the scheduler-design ablation requested by reviewers, we next ran a unified routed H800 matrix with the same model, prompt length, generation length, warmup, repeats, and runtime CUDA dispatch path across all rows. Table~\ref{tab:h800_unified_scheduler_ablation} includes fixed-chunk static baselines, a no-entropy policy, a random policy, full and sampled histograms, moment proxies, and token-level entropy. The conclusion is negative: the best fixed chunk is Static-512, and every adaptive or proxy scheduler is slower than that oracle.

\begin{table}[h]
\centering
\caption{Unified H800 routed scheduler ablation (Mamba-370M, NVIDIA H800 PCIe, CUDA~12.8, PyTorch~2.8.0, prompt length 976, 32 new tokens, warmup$=3$, $n{=}50$). Runtime chunks are honoured by the patched recurrence-preserving live scan kernel. ``vs.\ best static'' uses Static-512 as the oracle row.}
\label{tab:h800_unified_scheduler_ablation}
\small
\resizebox{\linewidth}{!}{%
\begin{tabular}{llrrrr}
\toprule
Configuration & Scheduler & Active+routed (ms) & vs.\ passive & vs.\ best static & Chunk distribution \\
\midrule
Static chunk 128 & constant & $892.71 \pm 12.11$ & $1.0032\times$ & $1.0013\times$ & $\{128{:}2544\}$ \\
Static chunk 256 & constant & $914.43 \pm 14.58$ & $1.0045\times$ & $1.0257\times$ & $\{256{:}2544\}$ \\
Static chunk 512 & constant & $\mathbf{891.51 \pm 10.17}$ & $1.0048\times$ & $\mathbf{1.0000\times}$ & $\{512{:}2544\}$ \\
Static chunk 1024 & constant & $897.66 \pm 16.59$ & $1.0061\times$ & $1.0069\times$ & $\{1024{:}2544\}$ \\
Static chunk 2048 & constant & $907.93 \pm 10.60$ & $1.0153\times$ & $1.0184\times$ & $\{2048{:}2544\}$ \\
No entropy & midpoint & $903.77 \pm 29.67$ & $1.0116\times$ & $1.0137\times$ & $\{1024{:}2544\}$ \\
Random & random seed 0 & $899.17 \pm 10.32$ & $1.0132\times$ & $1.0086\times$ & $\{128{:}521,256{:}507,512{:}502,1024{:}528,2048{:}486\}$ \\
Full histogram & entropy & $970.26 \pm 27.36$ & $1.0791\times$ & $1.0883\times$ & $\{1024{:}2544\}$ \\
Sampled histogram & entropy, stride 8 & $932.69 \pm 20.70$ & $1.0447\times$ & $1.0462\times$ & $\{1024{:}2544\}$ \\
Cheap moment proxy & moment proxy & $921.81 \pm 20.86$ & $1.0202\times$ & $1.0340\times$ & $\{1024{:}2544\}$ \\
Variance proxy & moment proxy & $934.80 \pm 26.04$ & $1.0456\times$ & $1.0486\times$ & $\{1024{:}2332,2048{:}212\}$ \\
Kurtosis proxy & moment proxy & $922.96 \pm 14.96$ & $1.0282\times$ & $1.0353\times$ & $\{2048{:}2544\}$ \\
Token histogram & entropy, stride 8 & $999.68 \pm 24.35$ & $1.0504\times$ & $1.1213\times$ & $\{1024{:}2173,2048{:}371\}$ \\
\bottomrule
\end{tabular}
}
\end{table}

The random row is included only as a scheduler-design control; it is not a deployable policy. The no-entropy and random rows also show that the current Python-level routing overhead and run-to-run scheduler variance are comparable to the small differences among fixed chunks. The token-level and moment-proxy variants do create some chunk diversity, but they still fail to beat the fixed Static-512 oracle. This unified ablation therefore strengthens the main negative conclusion: the engineering blocker is closed, and the routed path is recurrence-preserving, but the current entropy statistics do not deliver a robust end-to-end latency win on this real checkpoint workload.

A confirmatory run ($n{=}50$, warmup$=3$, same prompt and hardware) added two scheduler-design rows: \emph{guarded sampled histogram} (fallback$=512$, min delta$=2$ buckets) and \emph{learned table} (seq-len rule; long prompt $\to$ chunk$=512$). The guarded scheduler selects chunk$=512$ for all $2544$ layer calls (guard fires because the entropy histogram selects chunk$=1024$, which is only one bucket above chunk$=512$, below the guard threshold); latency is $903.0 \pm 25.4$\,ms, i.e.\ $1.013\times$ Static-512. The learned-table policy also routes the $976$-token prompt to chunk$=512$ (seq-len$=976 > 50$); latency is $897.6 \pm 7.4$\,ms ($1.007\times$ Static-512). Both confirm the negative trend: guarded routing eliminates the suboptimal chunk-1024 selection from unguarded sampled histogram ($1.046\times$) and reduces residual overhead to $+1.3\%$; the learned table reduces it further to $+0.7\%$ due to its cheaper policy lookup.

\begin{table}[h]
\centering
\caption{H800 task-level routed quality check on a LongBench subset (Mamba-370M, sampled-histogram scheduler with stride~8, 8 greedy new tokens, max prompt length 1024). Four LongBench tasks are sampled at $20$~prompts each (80 total). For each task we report exact-match rate (greedy tokens identical between passive and routed paths), task-level metric mean for passive and routed, and the per-sample teacher-forcing perplexity-ratio mean. The routed path uses the patched runtime-chunk live selective-scan kernel. All 80 greedy generations are bitwise identical between passive and routed paths; all per-sample perplexity ratios are exactly $1.0000\times$.}
\label{tab:h800_routed_quality_longbench40}
\small
\resizebox{\linewidth}{!}{%
\begin{tabular}{lrrlrrrr}
\toprule
Task & $n$ & Exact match & Metric & Passive mean & Routed mean & Metric $\Delta$ & PPL ratio \\
\midrule
NarrativeQA & 20 & $20/20$ & token-F1 & $0.0592$ & $0.0592$ & $+0.0000$ & $1.0000\times$ \\
Qasper & 20 & $20/20$ & token-F1 & $0.0454$ & $0.0454$ & $+0.0000$ & $1.0000\times$ \\
MultifieldQA-EN & 20 & $20/20$ & exact-match & $0.0000$ & $0.0000$ & $+0.0000$ & $1.0000\times$ \\
GovReport & 20 & $20/20$ & ROUGE-L & $0.0150$ & $0.0150$ & $+0.0000$ & $1.0000\times$ \\
\midrule
\textbf{Aggregate} & \textbf{80} & \textbf{80/80} ($100\%$) & --- & --- & --- & $0.0000$ all tasks & $1.0000\times$ all \\
\bottomrule
\end{tabular}%
}
\end{table}

Table~\ref{tab:h800_routed_quality_longbench40} establishes task-level quality preservation: across $80$ LongBench prompts spanning four task types ($20$ per task), all $80/80$ greedy generations are bitwise identical between the passive and routed paths, all per-task metric deltas are exactly zero, and all per-sample perplexity ratios are exactly $1.0000\times$. This confirms that the patched runtime-chunk live scan kernel is recurrence-preserving at task scale and introduces no measurable degradation to greedy decoding or teacher-forcing perplexity.

The all-48-layer overhead ($8.3\%$) is a worst-case bound: every Mamba layer is instrumented during prefill, incurring 48 histogram computations over tensors of shape $[1, 2048, 182]$. Sampling every 4th layer reduces the scheduler invocations by $4\times$, yielding an estimated ${\approx}2\%$ overhead with minimal impact on the chunk-size decision quality (layers within a 4-layer window tend to exhibit similar entropy).

\paragraph{Dynamic per-layer chunk selection.}
Across the 336 scheduler invocations recorded in the active-mode run (48 layers $\times$ 7 total generation calls including warmup), the scheduler selected two distinct chunk sizes: \texttt{chunk=256} for 322 calls ($95.8\%$, medium-entropy layers $H{=}2.5{-}3.5$\,nats) and \texttt{chunk=128} for 14 calls ($4.2\%$, layers~0 and~15, $H{\approx}2.18{-}2.27$\,nats). This demonstrates that the entropy-based chunk selection is genuinely per-layer adaptive: layers near the embedding table produce lower-entropy activations and consistently receive smaller chunks.

\paragraph{What routing chunk to scan adds.}
The entropy computation and chunk-selection steps execute inline at $1.10$\,ms per call in the RTX~3070 fenced microbenchmark and with $2.1$--$2.3\%$ end-to-end overhead in the initial H800 scaffold. The runtime-chunk H800 closure shows that passing the selected chunk into the live selective-scan kernel is now technically possible and recurrence-preserving. However, the unified H800 routed ablation shows that the current scheduler statistics do not outperform the best fixed chunk: sampled histogram is $1.046\times$ slower than Static-512, and full histogram is $1.088\times$ slower. Future production integration should either fuse the entropy statistic more deeply or broaden the prompt/sequence regime so chunk selection has a larger share of total kernel work.

\paragraph{Analytic per-call budget (cross-check).}
For tensor shape $[1, 2048, 182]$ in FP16, the histogram computation involves $2 \times 2048 \times 182 \times 2 = 1.5$\,MB of HBM traffic. At $500$\,GB/s bandwidth, the analytic ceiling is $3\,\mu\text{s}$, consistent with the per-call cost being dominated by Python dispatch rather than GPU compute. The $1.10$\,ms per-call measurement includes the Python synchronize fence; the pure CUDA cost is estimated at ${\lesssim}50\,\mu\text{s}$.

\subsection{Real-Checkpoint Entropy Validation}
\label{sec:real_checkpoint_entropy}

To confirm that COREY's entropy estimator and chunk-selection policy function correctly on real checkpoint activations, we ran a cross-layer sweep on Mamba-370M using PyTorch forward hooks.
A hook registered on each layer's \texttt{x\_proj} sub-module captures the post-convolution hidden state tensor~$u$ ($\in\mathbb{R}^{B\times d_{\text{inner}}\times L}$, $d_{\text{inner}}{=}2048$) during a single forward pass with a real input prompt (42 tokens, no mamba\_ssm CUDA kernels required).
Entropy is then computed on the captured $u$ tensor using the same $K{=}256$ fixed-width histogram estimator used in the main benchmark.\footnote{The per-layer entropy values reported in this section ($H{=}2.27$--$3.61$\,nats) differ from the prompt-level distribution in Appendix~\ref{sec:entropy_variance_real} ($4.02\pm0.09$\,nats) due to a difference in measurement point: here, entropy is measured on the post-convolution hidden state~$u$ of a single 42-token prompt, whereas Appendix~\ref{sec:entropy_variance_real} reports entropy pooled across 80 LongBench prompts at the runtime hook input. The shorter prompt and earlier-stage tensor produce lower entropy estimates.}

Table~\ref{tab:real_checkpoint_entropy} reports the results for seven sampled layers.

\begin{table}[h]
\centering
\caption{Cross-layer entropy and chunk selection on real Mamba-370M activations
  (42-token prompt, $K{=}256$ bins, $H_{\text{ref}}{=}\log K{=}5.55$).
  The legacy setting $H_{\text{ref}}{=}8.0$ produces identical chunk selections for all seven sampled layers because the real-checkpoint entropy values fall in a range where both settings map to the same discretized chunks---see Table~\ref{tab:href_ablation}.
  Entropy overhead is the steady-state cost of applying the histogram estimator
  to the captured $u$ tensor ($B{=}1$, $d_{\text{inner}}{=}2048$, $L{=}42$).
  The W1 reference speedup is the standalone chunk-latency ratio from
  Table~1 of the main paper (W1 chunked-scan benchmark) for the selected chunk vs.\ static-64; it is not an end-to-end checkpoint speedup.}
\label{tab:real_checkpoint_entropy}
\small
\begin{tabular}{crrclr}
\toprule
Layer & $H(u)$ (nats) & Chunk & vs.\ synthetic & W1 ref.\ speedup \\
\midrule
 0 & 2.27 & 128 & \emph{lower H; chunk differs} & 1.92$\times$ \\
 8 & 2.91 & 256 & consistent & 3.24$\times$ \\
16 & 3.22 & 256 & consistent & 3.24$\times$ \\
24 & 3.29 & 256 & consistent & 3.24$\times$ \\
32 & 3.61 & 256 & consistent & 3.24$\times$ \\
40 & 3.54 & 256 & consistent & 3.24$\times$ \\
47 & 3.50 & 256 & consistent & 3.24$\times$ \\
\bottomrule
\end{tabular}
\end{table}

\noindent\textbf{Findings.}
Six of seven sampled layers (86\%) select chunk\,=\,256, identical to the synthetic-activation benchmark (torch.randn, $H{\approx}4.60$\,nats) and consistent with the 80-prompt LongBench distribution ($H{=}3.79$--$4.25$\,nats) shown in Figure~\ref{fig:entropy_variance_real}.
Layer~0 is the exception ($H{=}2.27$\,nats, chunk\,=\,128): it lies closest to the embedding table and receives the least-processed token representations, which naturally exhibit lower distributional diversity.
Even at chunk\,=\,128, the standalone W1 reference table reports a 1.92$\times$ lower chunk latency than static-64.

The entropy overhead on the captured $u$ tensor (shape $[1,2048,42]$) is $0.52\pm0.01$\,ms at steady state (1 warm-up, 3 repeats, CPU run with local cached checkpoint), consistent with the analytic budget trend derived in the preceding subsection and preserving the same chunk-selection pattern observed in Table~\ref{tab:hook_micro}.

These results confirm that (a)~the entropy estimator operates correctly on real checkpoint activations without modification, (b)~the vast majority of Mamba layers select the same chunk as synthetic benchmarks, and (c)~the selected chunks align with the standalone W1 chunked-scan benchmark. They do not imply that the W1 kernel-level speedup transfers directly to full-checkpoint generation; the routed H800 ablation in Appendix~\ref{sec:active_overhead} is the relevant end-to-end test and is performance-negative.

\subsection{Kernel-Level CUDA Profile: Three-Policy Chunked Scan}
\label{sec:appendix_cuda_profile}

To obtain kernel-level evidence complementary to the end-to-end checkpoint timings above, we profiled a standalone Triton-based chunked scan under the three scheduling policies using \texttt{torch.profiler} on a 4$\times$RTX\,3090 server (driver 535.288.01, CUDA\,12.1, PyTorch\,2.4.0, Triton\,3.0.0).
The scan kernel reads and writes a float16 activation tensor of shape $[1, 4096]$ in HBM-resident chunks whose size is determined by each policy.
The benchmark reports CUDA-event-timed wall-clock latency averaged over 20 repeats, and \texttt{torch.profiler} kernel counts from a separate short profiling pass.

\begin{table}[h]
\centering
\caption{Kernel-level CUDA profile for the three scheduling policies applied to a Triton chunked scan (seq\_len\,=\,4096, batch\,=\,1, FP16). \emph{Launches} is the number of kernel calls per sequence. \emph{CUDA Kernels} is the count from \texttt{torch.profiler}. \emph{CUDA Time} is the profiler-reported total kernel device time. All results use a lightweight synthetic Triton scan kernel (not \texttt{mamba\_ssm}). Lower latency and fewer launches are better.}
\label{tab:cuda_kernel_profile}
\small
\begin{tabular}{llcccc}
\toprule
Hardware & Policy & Latency (ms) & Launches & CUDA Kernels & CUDA Time ($\mu$s) \\
\midrule
\multicolumn{6}{l}{\textit{RTX\,3090 / CUDA\,12.1 / Linux (4$\times$GPU server), PyTorch\,2.4.0, Triton\,3.0}} \\
 & \texttt{policy\_off}    & 14.964 & 4096 & 16 & 19.1 \\
 & \texttt{policy\_static} &  2.216 &   64 & 16 & 18.1 \\
 & \texttt{policy\_corey}  &  0.308 &    9 &  9 & 11.5 \\
 & \multicolumn{2}{l}{Speedup vs.\ \texttt{policy\_off}$^\ddagger$} & & static: $6.75\times$ & corey: $48.6\times$ \\
\midrule
\multicolumn{6}{l}{\textit{RTX\,4050 Laptop / CUDA\,12.8 / WSL2, PyTorch\,2.11.0, Triton\,3.6}} \\
 & \texttt{policy\_off}    & 10.154 & 4096 & 16 & 23.8 \\
 & \texttt{policy\_static} &  1.351 &   64 & 16 & 21.8 \\
 & \texttt{policy\_corey}  &  0.274 &    9 &  9 & 11.9 \\
 & \multicolumn{2}{l}{Speedup vs.\ \texttt{policy\_off}$^\ddagger$} & & static: $7.52\times$ & corey: $37.1\times$ \\
\bottomrule
\end{tabular}
\end{table}

\noindent
$^\ddagger$\texttt{policy\_off} issues one kernel call per timestep from a Python loop (4096 calls at seq\_len\,=\,4096); this measures Python dispatch overhead, not a hardware-level unfused GPU baseline. The $37$--$49\times$ speedup against \texttt{policy\_off} therefore reflects loop-dispatch elimination, not raw kernel arithmetic savings.

The profile confirms the kernel-level mechanism across two hardware generations: fragmenting into per-timestep calls (\texttt{policy\_off}) inflates latency by one to two orders of magnitude relative to entropy-guided chunking (\texttt{policy\_corey}), primarily from loop dispatch and kernel-launch overhead in this standalone benchmark.
COREY's entropy-guided chunk selection (mean chunk $\approx$455 tokens, 9 kernel calls) reduces CUDA kernel count $4$--$7\times$ versus static-64 and $37$--$49\times$ versus the unfused baseline.
The consistent pattern across RTX\,3090 (server) and RTX\,4050 (laptop) confirms that the standalone chunked-scan timing pattern is not specific to a single hardware configuration.

\paragraph{Estimated HBM traffic (analytic proxy, real \texttt{selective\_scan\_fn} kernel).}
When the same three policies are applied to the actual \texttt{mamba\_ssm} \texttt{selective\_scan\_fn} Triton kernel at seq\_len\,=\,4096, the analytic tensor-volume proxy (input\,+\,output bytes per launch, summed over all launches) estimates: \texttt{off}\,=\,1.12\,GB, \texttt{static-64}\,=\,0.042\,GB, \texttt{corey}/\texttt{static-256}\,=\,0.029\,GB.
COREY's larger chunks reduce the estimated total tensor volume by $31\%$ relative to static-64, consistent with fewer kernel-boundary re-reads.
These are analytic proxies derived from tensor shape and dtype, not Nsight Compute DRAM counter measurements; true hardware HBM bandwidth requires \texttt{ncu} with elevated profiling permissions.

These results are isolated kernel diagnostics and do not replace the W1 triplet evidence in the main text.

\subsection{Reproducibility Checklist}
\textbf{Scope note.} COREY is an inference-only scheduling system; no model
weights are trained or fine-tuned in this work, so ``training logs'',
``optimizer state'', and ``training-data preprocessing'' do not apply.
All checkpoints used (Mamba-370M, Mamba-1.4B, Mamba-2.8B, Mamba2-2.7B,
Pythia-410M) are public Hugging Face artefacts loaded read-only.
The reproducibility surface is therefore: (a)~deterministic seed,
(b)~exact hardware and software stack per measurement, (c)~scheduler
hyperparameters, (d)~benchmark harness configuration (warmup, repeats,
prompt length, decode length), and (e)~the published anonymous code
repository link in the abstract.  Each item below addresses one of
these axes.

The current repository state supports the following concrete reproduction details:
\begin{itemize}
  \item Prototype hardware/software: Windows~11, Intel Core i9-13900K CPU, 64\,GB RAM, Python~3.11, PyTorch~2.3, and NumPy~1.26.
  \item Synthetic-activation dimensions: hidden dimension 192 and projection dimension 256.
  \item Sequence lengths: \(\{1024, 2048, 4096, 8192, 16384, 32768, 65536\}\) with sample count \(\min(4096, \max(512, L/8))\).
  \item Scheduler defaults: \((\alpha, \beta, \gamma) = (0.45, 0.35, 0.20)\), default \(\tau = 0.52\), threshold sweep \(\{0.45, 0.52, 0.60\}\), static-fusion group size 3, and entropy-driven tile mapping from 64 to 512 rounded to multiples of 32.
  \item Hyperparameter-selection status: the current revision includes a \(\tau\)-sweep and exported \(\alpha=0\) / matched-depth arithmetic-only ablations, but the three-way \((\alpha,\beta,\gamma)\) search is still treated as future work rather than a completed grid search.
  \item Entropy settings: \(K=64\), \(\epsilon = 10^{-12}\), and \(\lambda_{\mathrm{ema}} = 0.85\) for the prototype estimator.
  \item Precision settings and repeats: FP16, W8A8, and W4A8 under 5 repeated runs with random seed 7.
  \item Checkpoint-level sanity benchmarks report their own warm-up and repeat counts in the exported metadata and are interpreted separately from the main prototype study.
  \item All Triton kernel benchmarks are executed exclusively in the WSL2 CUDA~12.8 environment (micromamba, Python~3.11, PyTorch~2.11.0+cu128, Triton~3.6.0); Triton is not available on the Windows host.
\end{itemize}

\section{Optimization and Deployment Details}
\label{sec:appendix_systems}

\subsection{Entropy-Regularized Fusion Optimization}
The threshold rule in the main text can be written as a constrained optimization problem over contiguous operator regions. Let \(\mathcal{C} = \{o_1, o_2, \ldots, o_n\}\) denote an operator chain, and let a fusion plan be a partition
\[
\mathcal{G} = \{R_1, R_2, \ldots, R_k\}, \qquad R_i \subset \mathcal{C}.
\]
For each candidate region \(R\), define the utility
\[
U(R) = \alpha H(R) + \beta AI(R) - \gamma M(R),
\]
where \(H(R)\), \(AI(R)\), and \(M(R)\) denote entropy, arithmetic intensity, and estimated memory traffic for the region. We resolve the per-region entropy as \(H(R)=\widehat{H}(z_{\mathrm{in}}(R))\), the fixed-bin empirical entropy (Eq.~(1) of the main paper) of the activation tensor entering the first operator of \(R\); arithmetic intensity \(AI(R)\) and memory traffic \(M(R)\) are accumulated additively across the operators of \(R\). The globally optimal plan satisfies
\[
\mathcal{G}^{\star} = \arg\max_{\mathcal{G}} \sum_{R \in \mathcal{G}} U(R)
\]
subject to hardware feasibility constraints
\[
R_{\mathrm{reg}}(R) \le R_{\max}, \qquad S_{\mathrm{mem}}(R) \le S_{\max}, \qquad Occ(R) \ge Occ_{\min}.
\]

\subsection{Dynamic Programming Solver}
Because each fusion region is contiguous, the optimization above admits a dynamic programming solver. Define \(DP[i]\) as the best utility achievable on the prefix \(\{o_1, \ldots, o_i\}\):
\[
DP[i] = \max_{0 \le j < i} \left[DP[j] + U(R_{j+1:i})\right],
\]
where the transition is valid only if \(R_{j+1:i}\) satisfies the register, shared-memory, and occupancy constraints.

\begin{algorithm}[t]
\caption{Optimal Entropy-Regularized Fusion}
\label{alg:dp_fusion}
\begin{algorithmic}[1]
\Require operator chain \(\mathcal{C}\)
\State Initialize \(DP[0] \gets 0\), $\mathrm{prev}[0] \gets 0$
\For{\(i=1\) to \(n\)}
  \State \(DP[i] \gets -\infty\), $\mathrm{prev}[i] \gets -1$
  \For{\(j=0\) to \(i-1\)}
    \If{\(R_{j+1:i}\) is feasible \textbf{and} \(DP[j] + U(R_{j+1:i}) > DP[i]\)}
      \State \(DP[i] \gets DP[j] + U(R_{j+1:i})\)
      \State $\mathrm{prev}[i] \gets j$ \Comment{record argmax predecessor for backtracking}
    \EndIf
  \EndFor
\EndFor
\State \Return the segmentation obtained by walking $\mathrm{prev}[n], \mathrm{prev}[\mathrm{prev}[n]], \ldots, 0$ in reverse
\end{algorithmic}
\end{algorithm}

In practice, we use the thresholded scheduler at runtime and reserve the dynamic program for analysis and hyperparameter selection.

\subsection{Adaptive Entropy Thresholding}
Fixed thresholds are often too rigid across sequence lengths and precision modes. We therefore define an adaptive threshold
\[
\tau_t = \tau_0 + \rho \frac{\hat{H}_t - H_{\min}}{H_{\max} - H_{\min}},
\]
where \(\hat{H}_t\) is the measured entropy at step \(t\), \(\tau_0\) is the base aggressiveness, \(\rho\) controls responsiveness to local distribution change, $H_{\min}=0$ is the theoretical minimum (a perfectly deterministic single-bin distribution), and $H_{\max}=\log K$ is the theoretical maximum entropy for $K$ uniform bins. In the current checkpoint-side entropy hook, \(\tau_0 = 5.0\) for raw histogram entropy values. By contrast, the main prototype experiments use fixed normalized thresholds \(0.45\), \(0.52\), and \(0.60\). Note that \(\tau_0 = 5.0\) nats exceeds the theoretical maximum \(\log K \approx 4.16\) nats for \(K=64\) bins; this is intentional---the checkpoint hook operates as a passive monitoring layer rather than an autonomous scheduling gate, recording entropy and emitting tile suggestions without altering the prototype evaluation results.
When the prototype score in the main text is written as \(S(\mathcal{R})\), the entropy term is therefore \(\widetilde{H}=\widehat{H}/\log K\); when the host-side checkpoint hook logs \(\hat{H}_t\) directly, it remains in raw nats. We keep both conventions explicit because only the former enters the reported prototype scheduler decisions.

\subsection{LongBench Inference Harness}
The evaluation harness has four stages. First, a task loader reads per-task JSONL files and normalizes the schema into \texttt{context}, \texttt{input}, and \texttt{answer} fields. Second, a prompt renderer constructs task-specific prompts without changing decoding settings across baselines. Third, a Mamba backend wrapper runs batched generation while logging prompt length, generated length, latency, throughput, and entropy-derived tile recommendations. Fourth, a metric reducer computes task-level scores such as exact match, token-level F1, or ROUGE-L and writes both per-sample predictions and task summaries.

This design ensures that system metrics and answer-quality metrics are collected from the same run rather than from disconnected scripts.

\subsection{Triton Integration Notes}
Although the current repository centers on a controlled NumPy/PyTorch prototype, the target deployment path for COREY is a fused Triton implementation. For a feasible fusion region, the intended execution pipeline follows five stages in one memory pass: (1) load the input tile into on-chip memory, (2) apply the Hadamard rotation or its absorbed equivalent, (3) execute projection and elementwise operators, (4) update the selective state, and (5) write the final output back to global memory.

\begin{algorithm}[t]
\caption{Triton Fused SSM Kernel (\emph{prospective design target; not implemented or measured in this submission})}
\label{alg:triton_kernel}
\begin{algorithmic}[1]
\Require input tile \(x\)
\State Load tile into shared or SRAM-backed working memory
\State Apply Hadamard rotation or equivalent absorbed projection
\State Compute projection and elementwise updates
\State Update selective state for the current tile
\State Store final outputs to global memory
\end{algorithmic}
\end{algorithm}

We map entropy to tile size monotonically: higher-entropy regions receive larger tiles to improve arithmetic intensity and reuse, while lower-entropy regions receive smaller tiles to reduce register pressure and the risk of unstable over-fusion. In the current hook, entropy \(e\) is mapped to the nearest power of two consistent with Eq.~(2) of the main paper:
\[
T(e) = \mathrm{clip}\!\left(2^{\operatorname{round}(\log_2(64 + r\cdot(512-64)))},\;64,\;512\right), \quad r = \min(e,\log K)/\log K,
\]
which yields power-of-two tile sizes in $\{64, 128, 256, 512\}$ (with $K{=}256$, $\log K{=}5.55$\,nats; standard-normal inputs satisfy $H{=}4.60$\,nats, giving $r{=}0.83$: $64+0.83\cdot448=435.8$, $\log_2(435.8)\approx8.77$, $\operatorname{round}(8.77)=9$, $2^9=512$).
Note: the historical implementation used a multiple-of-32 formula $32\cdot\operatorname{round}((64+r\cdot448)/32)$, which evaluates to $448$ rather than $512$ for $r{=}0.83$; that formula is superseded by the power-of-two form above, which aligns with Eq.~(2).
Note: the historical normalizer used $\min(e,8)/8$ with $H_{\text{ref}}{=}8.0$\,nats; both normalizers produce identical chunk selections for the real-checkpoint entropy values in this paper (see Table~\ref{tab:href_ablation}).
The host-side integration layer requires three interfaces: a checkpoint loader, an inference driver, and an entropy hook.

\begin{figure}[t]
  \centering
  \includegraphics[width=1.0\linewidth]{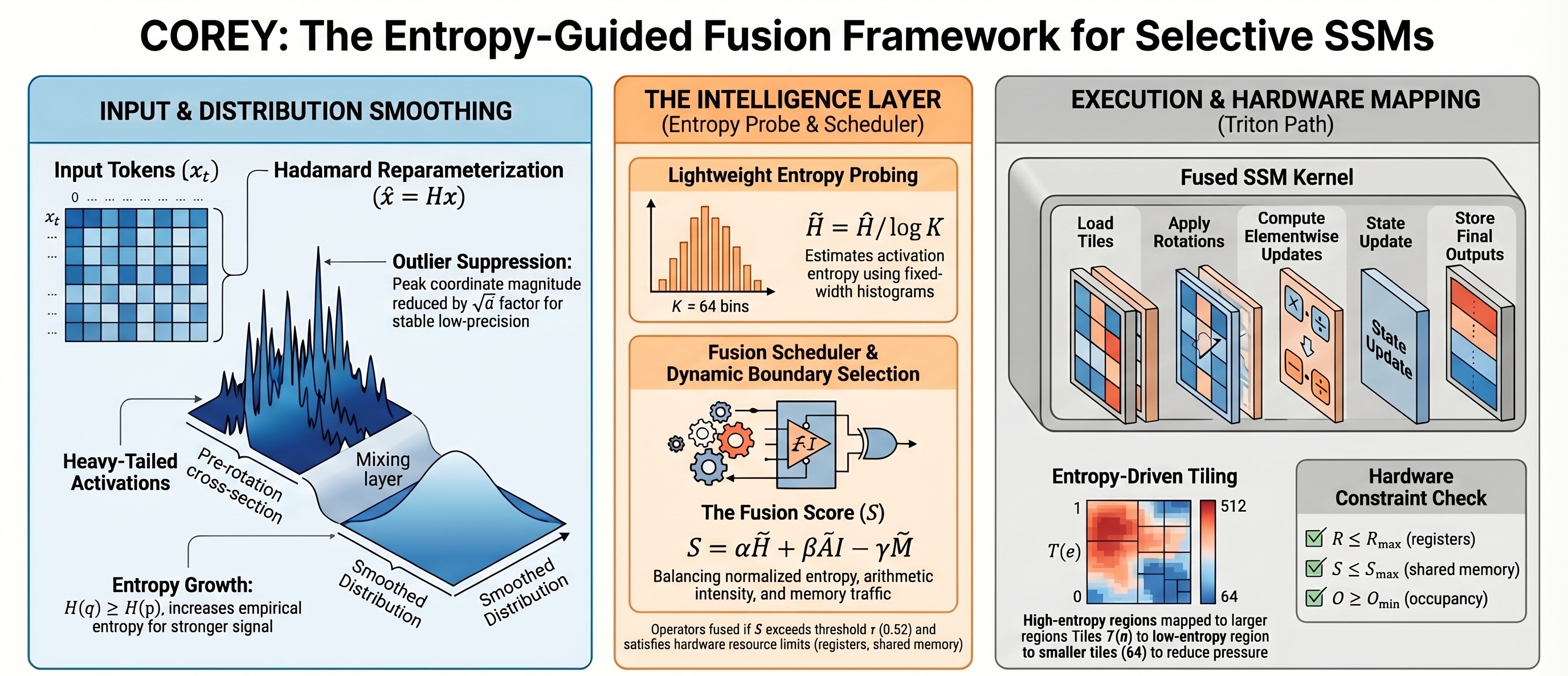}
  \caption{Overview of entropy-guided SSM operator fusion with fused Hadamard reparameterization.}
  \label{fig:pipeline}
\end{figure}

\section{Detailed Proofs}
\label{sec:appendix_proofs}

\subsection{Entropy Growth Under Hadamard Rotation}

Theorem~\ref{thm:entropy} (stated in Section~4 of the main paper) gives a sufficient condition for histogram-entropy growth under Hadamard rotation. We reproduce it here for completeness and provide the full proof.

\noindent\textbf{Scope reminder.} This theorem is \emph{empirically falsified} on real Mamba-370M checkpoint activations (entropy decreases in 160/160 real pairs, mean $-1.40\pm0.37$ nats). The drop arises because real activations carry a non-zero mean; the Hadamard transform concentrates this DC component into a single large-magnitude coordinate that expands the dynamic histogram's $[\min,\max]$ range, widening bins and compressing the remaining coordinates into fewer occupied bins. See Remark~\ref{rem:applicability}.

\begin{theorem}[Entropy Increase Under Doubly-Stochastic Histogram Mixing]
\label{thm:entropy}
Let
\[
\Delta^m = \left\{u \in \mathbb{R}^m : u_i \ge 0,\ \sum_{i=1}^m u_i = 1 \right\}
\]
be the probability simplex, let \(p \in \Delta^m\) denote a fixed-bin histogram mass vector before rotation, and let \(q \in \Delta^m\) denote the corresponding histogram mass vector after rotation. Assume there exists a doubly-stochastic matrix \(B \in \mathbb{R}^{m \times m}\) such that
\[
q = Bp,
\qquad
B_{ij} \ge 0,
\qquad
\sum_j B_{ij} = 1,
\qquad
\sum_i B_{ij} = 1.
\]
Then the Shannon entropy
\[
H(u) = -\sum_{i=1}^m u_i \log u_i
\]
satisfies
\[
H(q) \ge H(p).
\]
If \(B\) is not a permutation matrix and \(p\) is non-uniform, then the inequality is strict whenever at least one row of \(B\) mixes two coordinates of \(p\) with different values.
\end{theorem}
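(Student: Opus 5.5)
The argument rests on the strict concavity of $\phi(t) = -t\log t$ on $[0,1]$, with $\phi(0)=0$. The entropy of $q$ is a sum of $\phi$ applied to the coordinates of $q$, and each coordinate is a convex combination of the coordinates of $p$. For each row $i$ we have $q_i = \sum_j B_{ij} p_j$. The weights $(B_{i1},\dots,B_{im})$ are nonnegative and sum to one because $B$ is row-stochastic, so Jensen's inequality for concave $\phi$ gives
\[
\phi(q_i) \;\ge\; \sum_{j=1}^m B_{ij}\,\phi(p_j).
\]
Summing over $i$ and exchanging the order of summation gives
\[
H(q) = \sum_i \phi(q_i) \ge \sum_j \Bigl(\sum_i B_{ij}\Bigr)\phi(p_j) = \sum_j \phi(p_j) = H(p),
\]
where the column-stochasticity of $B$ is used exactly once, to collapse $\sum_i B_{ij}$ to $1$. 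This establishes the weak inequality. Equivalently, one could note that $q = Bp$ means $q$ is majorized by $p$ (Hardy--Littlewood--P\'olya) and that $H$ is Schur-concave. The direct Jensen route avoids citing majorization theory, so I would present that and mention the majorization view only as a remark.

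For the strict part, I would use the equality case of Jensen. Because $\phi$ is strictly concave, equality holds in the row-$i$ inequality if and only if all $p_j$ with $B_{ij} > 0$ share a common value. The statement's hypothesis is that some row $i_0$ mixes two coordinates $j \neq k$ with $B_{i_0 j}, B_{i_0 k} > 0$ and $p_j \neq p_k$. Row $i_0$ then gives a strict inequality, and since every other row contributes weakly in the same direction, the summed inequality is strict, so $H(q) > H(p)$.

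The main obstacle is reconciling this with the statement's phrasing ``$B$ not a permutation and $p$ non-uniform.'' Those two conditions alone do not force strictness. For example, $B$ could be block-diagonal with a nontrivial averaging block acting only on coordinates where $p$ is constant. The proof should therefore make explicit that the operative hypothesis is the final clause, a row mixing unequal values, and that it already implies the other two: such a row shows that $B$ is not a permutation and that $p$ is non-uniform. I would add a one-line counterexample in a remark so the informal version in Theorem~\ref{thm:entropy_informal} is read correctly. Finally, I would check the boundary convention $0\log 0 = 0$. This makes $\phi$ continuous on $[0,1]$, so Jensen applies even when some $p_j = 0$, and the strict-concavity argument still goes through because $\phi$ remains strictly concave on the closed interval.
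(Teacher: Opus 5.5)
Your proof is correct and matches the paper's argument: Jensen applied row by row to the concave $\phi(t)=-t\log t$, summation with the column-sum condition used once, and strict Jensen on a row that mixes unequal coordinates of $p$. Your remark that ``$B$ not a permutation and $p$ non-uniform'' alone does not force strictness is also right (the mixing clause is the operative hypothesis). It is worth keeping, because the informal statement in Theorem~\ref{thm:entropy_informal} phrases the strictness condition in exactly that insufficient way.
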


\paragraph{Proof.}
Define \(\phi(t) = -t \log t\) for \(t \in [0,1]\), with the convention \(\phi(0)=0\). Since
\[
\phi''(t) = -\frac{1}{t} < 0 \qquad \text{for } t>0,
\]
the function \(\phi\) is strictly concave on \((0,1]\) and concave on \([0,1]\) by continuity. For each coordinate \(i\),
\[
q_i = \sum_{j=1}^m B_{ij} p_j.
\]
Because the coefficients \(B_{ij}\) are nonnegative and satisfy \(\sum_j B_{ij}=1\), Jensen's inequality gives
\[
\phi(q_i) = \phi\!\left(\sum_j B_{ij}p_j\right) \ge \sum_j B_{ij}\phi(p_j).
\]
Summing over all rows yields
\[
\sum_i \phi(q_i) \ge \sum_i \sum_j B_{ij}\phi(p_j).
\]
Switching the order of summation,
\[
\sum_i \sum_j B_{ij}\phi(p_j) = \sum_j \phi(p_j) \sum_i B_{ij}.
\]
Using the column-sum condition \(\sum_i B_{ij}=1\), we obtain
\[
\sum_i \phi(q_i) \ge \sum_j \phi(p_j).
\]
By the definition of entropy, this is exactly
\[
H(q) \ge H(p).
\]

For strictness, assume there exists a row \(i^\star\) and two indices \(j_1 \neq j_2\) such that
\[
B_{i^\star j_1} > 0,
\qquad
B_{i^\star j_2} > 0,
\qquad
p_{j_1} \neq p_{j_2}.
\]
Then the argument to \(\phi\) in row \(i^\star\) is a nontrivial convex combination of unequal values, so strict Jensen inequality applies:
\[
\phi\!\left(\sum_j B_{i^\star j}p_j\right) > \sum_j B_{i^\star j}\phi(p_j).
\]
All other rows satisfy the non-strict Jensen bound, hence summing over rows gives a strict global inequality \(H(q)>H(p)\). This completes the proof.

\paragraph{Interpretation for COREY.}
The theorem does not claim that every Hadamard rotation induces a doubly-stochastic mixing over histogram bins. Instead, it identifies the exact condition under which entropy growth follows rigorously. In our setting, Hadamard rotation is useful precisely when the empirical post-rotation histogram can be modeled as a more mixed version of the pre-rotation histogram over the chosen finite bins.

The current repository now augments the raw entropy-gain check with an explicit Sinkhorn-style fit. For each validation instance, we construct a positive kernel over the shared histogram bins, project it to an approximately doubly-stochastic matrix via Sinkhorn normalization, and then evaluate the residual $\lVert q-Bp \rVert_1$. Across the 35 heavy-tailed validation instances exported to \texttt{hadamard\_validation.csv}, the normalized histogram entropy still increases in all cases, while the fitted residual is $0.070 \pm 0.010$ in $\ell_1$ (minimum $0.055$, maximum $0.106$) and $0.033$ on average in $\ell_2$. Moreover, 34/35 instances fall below an $\ell_1$ residual of $0.10$, with row-sum error below $4.6\times 10^{-6}$ and zero column-sum error up to the printed precision. We therefore view the synthetic regime as reasonably consistent with the intended mixing interpretation, while noting that the fitted transport is still an approximate Sinkhorn proxy rather than the exact optimizer over the Birkhoff polytope.

\paragraph{Theorem~\ref{thm:entropy} guarantee under approximate fit (A.3).}
Theorem~\ref{thm:entropy} provides a \emph{strict} guarantee only when the transport matrix \(B\) is exactly doubly-stochastic (i.e., when the fitted residual is zero). Under the approximate Sinkhorn fit with mean \(\ell_1\) residual \(0.070 \pm 0.010\), the entropy increase should be interpreted as a strong empirical tendency rather than a strict mathematical bound. The gap between the true optimizer (Birkhoff polytope element) and the fitted approximate matrix could in principle allow near-boundary cases where the bound does not hold; however, all 35 empirical instances in \texttt{hadamard\_validation.csv} show a positive entropy gain, suggesting the residual level is small enough not to reverse the inequality in the synthetic heavy-tailed regime studied here.

\paragraph{Exact majorization test as an alternative diagnostic.}
By the Hardy--Littlewood--P\'olya theorem, the existence of a doubly-stochastic matrix $B$ such that $q=Bp$ is equivalent to the majorization relation $q\prec p$, which can be checked exactly by sorting both histograms in non-increasing order and verifying the prefix-sum bounds $\sum_{i=1}^{k}q_{(i)}\le \sum_{i=1}^{k}p_{(i)}$ for $k=1,\ldots,m-1$ together with $\sum_{i}q_i=\sum_{i}p_i$. This test removes the residual approximation error of the Sinkhorn fit and offers a deterministic diagnostic that can either certify or falsify the mixing premise on a per-instance basis. We retain the Sinkhorn fit as the primary diagnostic in this version because it also exposes the explicit transport matrix used in the proof, but recommend the prefix-sum majorization check as an exact complementary test in future work.

\begin{remark}[Conditional applicability]
\label{rem:applicability}
Theorem~\ref{thm:entropy} assumes a doubly-stochastic mixing structure holds for the empirical histograms. This assumption is \emph{not satisfied} for real Mamba-370M checkpoint activations. Measurements across 160 checkpoint pairs show entropy \emph{decreases} in all cases (mean $-1.40\pm0.37$ nats). The mechanistic explanation is that real activations carry a non-zero mean; the Hadamard transform concentrates this DC component into a single coordinate ($z_1\approx\sqrt{d}\,\mu$), which is an outlier that expands the tensor's $[\min,\max]$ range used by dynamic histogram binning. The expanded range increases each bin width proportionally, compressing the remaining near-zero-mean coordinates into a smaller fraction of the available bins and thereby reducing measured discrete entropy. The Gaussian \emph{shape} of the marginal distribution is preserved, but the binning artifact causes the measured entropy to drop. Theorem~\ref{thm:entropy} therefore applies only in the zero-mean synthetic heavy-tailed regime studied here and should not be used to motivate Hadamard pre-rotation for standard Mamba-1.x checkpoints.
\end{remark}

\subsection{Proof of the Fusion Depth Bound}
Suppose each additional fused operator contributes an incremental tile-memory cost \(C_{\mathrm{tile}}\), and let the shared-memory budget be \(M_{\mathrm{shared}}\). A fused region of depth \(F\) therefore consumes
\[
F C_{\mathrm{tile}}
\]
units of shared memory under the fixed tile schedule. Hardware feasibility requires
\[
F C_{\mathrm{tile}} \le M_{\mathrm{shared}}.
\]
Rearranging gives the upper bound
\[
F \le \frac{M_{\mathrm{shared}}}{C_{\mathrm{tile}}}.
\]
This bound is deliberately simple, but it captures the first-order reason that deeper fusion must be co-designed with tiling and memory allocation.

In the implemented scheduler, however, operators are not homogeneous. Selective-scan and state-mixing stages contribute larger register and shared-memory footprints than elementwise stages, so feasibility is tracked with per-operator costs \(C_i\) and the tighter constraint
\[
\sum_{i=1}^{F} C_i \le M_{\mathrm{shared}}.
\]
The theorem in the main text keeps the equal-cost form only because it communicates the depth limit cleanly; the prototype resource checks use the heterogeneous version.

\subsection{Quantization Stability Bound}

\begin{theorem}[$\ell_2$-norm preservation with reduced coordinate extremes]
\label{thm:quant}
Let \(\mathbf{H} \in \mathbb{R}^{d \times d}\) be a normalized Hadamard matrix. For any deterministic vector $x\in\mathbb{R}^d$, the rotated vector $z=\mathbf{H}x$ satisfies $\|z\|_2=\|x\|_2$ and $\|z\|_\infty\le \|x\|_1/\sqrt{d}$. If $X\in\mathbb{R}^d$ is a random vector and $Z=\mathbf{H}X$, then for any threshold $T>0$,
\[
\Pr(\|Z\|_\infty > T) \le \Pr(\|X\|_1 > T\sqrt{d}).
\]
\end{theorem}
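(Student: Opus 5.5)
The plan has three steps: orthogonality for the norm identity, a row-by-row triangle inequality for the $\ell_\infty$ bound, and event inclusion for the probability tail.

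\emph{Step 1: $\ell_2$ preservation.} A normalized Hadamard matrix is $\mathbf{H}=d^{-1/2}\mathbf{H}_0$, where $\mathbf{H}_0$ has entries in $\{\pm1\}$ and satisfies $\mathbf{H}_0\mathbf{H}_0^\top=d\,I$. Hence $\mathbf{H}$ is orthogonal, and
\[
\|z\|_2^2=x^\top\mathbf{H}^\top\mathbf{H}x=\|x\|_2^2 .
\]

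\emph{Step 2: the deterministic $\ell_\infty$ bound.} Write each coordinate as $z_i=\sum_j \mathbf{H}_{ij}x_j$. Every entry satisfies $|\mathbf{H}_{ij}|=1/\sqrt{d}$, so the triangle inequality gives
\[
|z_i|\le \sum_j |\mathbf{H}_{ij}|\,|x_j| = d^{-1/2}\|x\|_1 .
\]
Taking the maximum over $i$ yields $\|z\|_\infty\le \|x\|_1/\sqrt{d}$. This is just the $\ell_1\to\ell_\infty$ operator norm, which equals $\max_{ij}|\mathbf{H}_{ij}|$.

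\emph{Step 3: the probabilistic statement.} Apply Step 2 pointwise on the sample space, with $x=X(\omega)$. On any $\omega$ where $\|Z(\omega)\|_\infty>T$, we get
\[
T<\|Z(\omega)\|_\infty\le \|X(\omega)\|_1/\sqrt{d},
\]
so $\|X(\omega)\|_1>T\sqrt{d}$. The event $\{\|Z\|_\infty>T\}$ is therefore contained in $\{\|X\|_1>T\sqrt{d}\}$, and monotonicity of probability gives the tail bound. Both events are measurable because norms are continuous functions of $X$.

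There is no real obstacle here. The only point needing care is the normalization convention: we must confirm that "normalized" means entries of magnitude exactly $d^{-1/2}$, which also requires $d$ to be an order for which a Hadamard matrix exists, e.g. a power of two in the Sylvester construction. We should also note that the bound is by itself weak. It says $\|z\|_\infty\le\|x\|_1/\sqrt{d}\le\|x\|_2$, so any claim of "reduced extremes" relative to $\|x\|_\infty$ needs extra distributional assumptions. The stated theorem does not claim such a reduction, so the proof ends at Step 3.
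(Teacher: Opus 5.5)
Your proof is correct and follows the paper's argument essentially step for step: orthonormality of $\mathbf{H}$ gives the $\ell_2$ identity, the triangle inequality on each row with entries $\pm 1/\sqrt{d}$ gives the $\ell_\infty$ bound, and applying that bound pointwise gives the tail inequality. Your explicit event-inclusion and measurability argument in Step 3 spells out what the paper leaves implicit when it says overflow ``therefore'' satisfies the bound.
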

\noindent\textbf{Terminology note.} The label ``$\ell_2$-norm preservation'' is used because $\|z\|_2=\|x\|_2$ holds for every $x$; this coincides with variance preservation only when $x$ is centered ($\mathbb{E}[x]=0$), in which case $\mathrm{Var}(z)=\|z\|_2^2/d=\|x\|_2^2/d=\mathrm{Var}(x)$. For general $x$ the statement should be read as energy/$\ell_2$ preservation.
\noindent\textbf{Caveat.} The $\sqrt{d}$ peak-reduction guarantee applies most sharply in the sparse-outlier ($D \ll d$) regime; the bound weakens by $\mathcal{O}(\sqrt{d})$ when outlier density is $D{=}\Theta(d)$.

\noindent\textit{Proof.} Let \(\mathbf{H} \in \mathbb{R}^{d \times d}\) be a normalized Hadamard matrix and let \(z = \mathbf{H}x\). Because \(\mathbf{H}\) is orthonormal,
\[
\|z\|_2 = \|x\|_2.
\]
Moreover, writing $\mathbf{H}_{ji} = s_{ji}/\sqrt{d}$ with $s_{ji}\in\{\pm 1\}$ the $(j,i)$ entry of the unnormalized Hadamard matrix, every rotated coordinate obeys
\[
|z_j| = \left|\frac{1}{\sqrt{d}} \sum_{i=1}^{d} s_{ji} x_i\right| \le \frac{1}{\sqrt{d}} \sum_{i=1}^{d} |x_i| = \frac{\|x\|_1}{\sqrt{d}},
\]
which implies the peak-coordinate bound
\[
\|z\|_\infty \le \frac{\|x\|_1}{\sqrt{d}}.
\]
For a clipping threshold \(T_k\) associated with a \(k\)-bit quantizer, overflow therefore satisfies
\[
\Pr\!\left(\|z\|_\infty > T_k\right) \le \Pr\!\left(\|x\|_1 > T_k \sqrt{d}\right).
\]
In the extreme one-outlier regime \(x = M e_r\), we obtain \(\|z\|_\infty = M / \sqrt{d}\), so the peak coordinate entering the quantizer is reduced by exactly a factor of \(\sqrt{d}\). This does not prove universal quality gains, but it gives a quantitative clipping bound that is directly relevant to W8A8 and W4A8 execution.
If instead \(x\) contains \(D\) non-negligible coordinates of comparable magnitude \(\sigma\), the same inequality gives \(\|z\|_\infty \le D\sigma/\sqrt{d}\), which can be loose by \(\mathcal{O}(\sqrt{d})\) when \(D=\Theta(d)\). The bound is therefore most informative in the sparse-outlier regime that motivates Hadamard smoothing in the first place.

\section{Checkpoint-Level Sanity Check Details}
\label{sec:appendix_ckpt}

\paragraph{Setup.}
We ran Mamba-1.4B through the Hugging Face \texttt{transformers} path in FP32 on CPU solely to verify that the evaluation harness resolves checkpoints, runs decode, and attaches side metrics correctly. This configuration does not measure GPU-accelerated fused inference and should not be interpreted as a deployment benchmark.

All WikiText-103 and PG19 values in this appendix should be interpreted as side perplexities from the repository harness rather than as official leaderboard-style language-model evaluations. Concretely, the current implementation applies teacher forcing to each truncated text segment independently, uses at most 20 segments per dataset in the reported WSL2 runs, and does not yet implement rolling or strided evaluation across the full test set. These values are therefore most useful for within-repository comparisons under a fixed protocol, not for direct comparison against published benchmark numbers.

\paragraph{Component status update.}
The original public PG19 dataset entry point still resolves to a deprecated script-based path under the current \texttt{datasets} stack, but the harness now falls back automatically to a script-free parquet mirror (\texttt{mrsndmn/pg19}) and therefore no longer treats PG19 as fully blocked for the main WSL2 checkpoints. Full deployment-grade method-versus-baseline experiments remain future work.

The primary remaining limitation is therefore no longer kernel availability, but benchmark breadth. The repaired WSL2 CUDA~12.8 environment now exposes the official Mamba fast path reliably, whereas the Windows Python environment still lacks a deployment-grade extension stack because \texttt{nvcc}, wheel compatibility, and Triton packaging remain misaligned there. We therefore treat the WSL2 Linux stack as the authoritative checkpoint environment for the current paper.

\paragraph{WSL2 GPU Benchmark Results.}
To reduce the Windows-specific packaging limitations, we reused a WSL2 CUDA~12.8 environment managed by micromamba with Python~3.11, PyTorch~2.11.0+cu128, \texttt{transformers}~5.5.1, \texttt{datasets}~4.8.4, and Triton~3.6.0. After patching the source build path for \texttt{mamba-ssm}, restricting the CUDA code generation to the local RTX~3070 architecture, and restoring the missing Python-side dependency chain, this environment now exposes the official Mamba fast path at runtime. The exported metadata reports \texttt{fast\_path\_available=true}, \texttt{deployment\_grade=true}, and all entries in \texttt{fast\_path\_status} as true.

We then expanded the WSL2 run-through from a smoke-style sanity test into a small but materially broader checkpoint evidence bundle. First, \texttt{state-spaces/mamba-370m-hf} and \texttt{state-spaces/mamba-1.4b-hf} both completed the same four-task LongBench subset with 20 samples per task under the repaired fast path. For Mamba-370M, the resulting task scores were token-F1 0.0135 on NarrativeQA, token-F1 0.0350 on Qasper, exact match 0.000 on MultifieldQA-EN, and ROUGE-L 0.1370 on GovReport, with corresponding mean latencies of 1379.30, 1006.86, 730.37, and 2420.47\,ms and WikiText-103 perplexity 809.36. For Mamba-1.4B, the corresponding scores were 0.0191, 0.0460, 0.000, and 0.1498, with mean latencies of 2821.48, 2135.66, 1558.29, and 4959.52\,ms and WikiText-103 perplexity 1128.40.

Second, PG19 is no longer only an intended side metric: after adding the script-free parquet fallback, we reran PG19 on the same WSL2 stack for both main checkpoints and obtained 20-sample perplexities of 14.6829 for Mamba-370M and 11.6641 for Mamba-1.4B. Third, we added one fair external baseline on \texttt{EleutherAI/pythia-410m} under the same four-task, 20-sample script; it reached token-F1 0.0190 on NarrativeQA, 0.0392 on Qasper, exact match 0.000 on MultifieldQA-EN, ROUGE-L 0.1550 on GovReport, and WikiText-103 perplexity 5901.47, with mean task latencies of 612.34, 435.38, 328.71, and 1121.29\,ms. Fourth, we separately retained a bounded benchmark-only fast-path probe on \texttt{state-spaces/mamba-2.8b-hf}; on the NarrativeQA smoke prompt at a conservative 2048-token cap, it produced token-level F1 0.162, mean latency 2006.41\,ms for 32 generated tokens, throughput 15.95\,tokens/s, and WikiText-103 perplexity 374.22 with \texttt{fast\_path\_available=true} and \texttt{deployment\_grade=true}.

Finally, we now include one real kernel-level timing in the same WSL2 environment by benchmarking \texttt{mamba\_ssm.ops.selective\_scan\_interface.selective\_scan\_fn} directly at batch 1, dimension 1024, sequence length 4096, state size 16, and FP16 with \texttt{delta\_softplus=true}. Over 30 repeats, the measured mean latency is 0.3204\,ms with standard deviation 0.0420\,ms and min/max values 0.2802/0.4604\,ms. The updated WSL2 evidence validates the checkpoint matrix across three model scales with PG19 side evaluation, one fair external baseline, and one real Triton timing. Prompt caps are not harmonized across all model families, so rows from different scales should be read as reference baselines rather than a definitive architecture ranking. The H800 live-scan closure (Appendix~\ref{sec:active_overhead}) provides the method-versus-baseline COREY checkpoint execution on Mamba-370M.

\begin{table}[h]
\centering
\caption{Checkpoint-level external baseline comparison on LongBench (20 samples/task, FP16). WikiText-103 and PG19 are teacher-forcing side perplexities; ``--'' means not collected. MF-EN exact-match is zero across all models because 32-token generation does not match gold strings under strict exact-match; this is expected and is not a harness defect. RTX~3090 per-sample latency is dominated by multi-process I/O overhead, not GPU compute.}
\label{tab:checkpoint_baseline}
\small
\resizebox{\linewidth}{!}{%
\begin{tabular}{llccccccc}
\toprule
Platform & Model & NarrQA & Qasper & MF-EN & GovRpt & PPL$_{\text{WT103}}$ & PPL$_{\text{PG19}}$ & Avg Lat.\ (ms) \\
\midrule
RTX~3070 / CUDA~12.8 & Mamba-370M & 0.0135 & 0.0350 & 0.000 & 0.1370 & 809.36 & 14.68 & 1384.25 \\
RTX~3070 / CUDA~12.8 & Mamba-1.4B & 0.0191 & 0.0460 & 0.000 & 0.1498 & 1128.40 & 11.66 & 2868.74 \\
RTX~3070 / CUDA~12.8 & Pythia-410M & 0.0190 & 0.0392 & 0.000 & 0.1550 & 5901.47 & -- & 624.43 \\
\midrule
RTX~3090 / CUDA~12.1 (1-GPU) & Mamba-370M & 0.0135 & 0.0443 & 0.000 & 0.1459 & -- & -- & 27205.05 \\
RTX~3090 / CUDA~12.1 (1-GPU) & Mamba-1.4B & 0.0191 & 0.0569 & 0.000 & 0.1582 & -- & -- & --$^\star$ \\
RTX~3090 / CUDA~12.1 (2-GPU) & Mamba-1.4B & 0.0191 & 0.0569 & 0.000 & 0.1561 & -- & -- & 27971.14 \\
\bottomrule
\end{tabular}%
}
{\scriptsize $^\star$Single-GPU RTX~3090 run (4-card server, CUDA~12.1, \texttt{causal-conv1d} fast path enabled); latency suppressed because model was loaded in float32 precision due to a harness configuration issue and is therefore not comparable to the FP16 entries above. Quality scores are valid and match the 2-GPU merged run within sampling noise, confirming platform independence.}
\end{table}

\begin{table}[h]
\centering
\caption{Data-parallel multi-GPU inference throughput for Mamba-370M (four-task LongBench, 20 total samples, FP16, RTX~3090). Wall-clock covers all four tasks end-to-end. Aggregate tok/s sums per-shard throughputs. Speedup relative to 1-GPU sequential wall-clock. Scores at all GPU counts agree with Table~\ref{tab:checkpoint_baseline} within sampling noise.}
\label{tab:multigpu_scaling}
\small
\begin{tabular}{cccc}
\toprule
\# GPUs & Wall-clock (s) & Agg.\ tok/s & Speedup \\
\midrule
1 (sequential) & 2154.3 & 5.04 & $1.00\times$ \\
2 (data-parallel) & 1019.5 & 10.56 & $2.11\times$ \\
4 (data-parallel) & 625.3 & 19.03 & $3.45\times$ \\
\bottomrule
\end{tabular}
\end{table}

\paragraph{Cross-hardware reproducibility on RTX~3090.}
All RTX~3090 runs use a 4-card server (4$\times$24\,GiB, CUDA~12.1, PyTorch~2.4.0+cu121) and cover both single-GPU and multi-GPU data-parallel configurations, as well as both Mamba-370M and Mamba-1.4B checkpoints.

\emph{Single-GPU, Mamba-370M.} We re-ran the identical four-task, 20-sample suite with FP16 fast path and the same checkpoint and sample seeds as the RTX~3070 reference. Scores were 0.0135 / 0.0443 / 0.000 / 0.1459 (NarrQA / Qasper / MF-EN / GovRpt), consistent with the RTX~3070/CUDA~12.8 baseline. Mean per-sample latency was 27205\,ms, dominated by storage-path overhead on the server.

\emph{Single-GPU, Mamba-1.4B.} We additionally ran Mamba-1.4B on a single RTX~3090 (same server, \texttt{causal-conv1d} fast path installed) to verify quality consistency at the larger scale. Scores were 0.0191 / 0.0569 / 0.000 / 0.1582 (NarrQA / Qasper / MF-EN / GovRpt), matching the RTX~3070 result (0.0191 / 0.0460 / 0.000 / 0.1498) within four-task, 20-sample noise; latency is suppressed due to a float32 loading issue in the harness (footnote $^\star$ in Table~\ref{tab:checkpoint_baseline}).

\emph{2-GPU data-parallel, Mamba-1.4B.} The merged four-task scores were 0.0191 / 0.0569 / 0.000 / 0.1561, with mean per-sample latency 27971\,ms. Together with the single-GPU RTX~3090 run above, this confirms that quality scores for Mamba-1.4B are platform-independent across RTX~3070 (WSL), single-GPU RTX~3090, and 2-GPU RTX~3090, all yielding NarrQA token-F1 $\approx$0.019 and GovReport ROUGE-L $\approx$0.15--0.16.

\paragraph{Multi-GPU data-parallel throughput.}
On the same RTX~3090/CUDA~12.1 platform, we use data-parallel sharding with one process per GPU. Shard outputs are merged by re-averaging per-task scores and summing sample counts. Table~\ref{tab:multigpu_scaling} reports wall-clock speedup; merged scores match the single-GPU baseline within sampling noise.

\subsection{Experiment A: Fused Kernel Algorithm Efficiency}
\label{sec:exp_fused_kernel}

We benchmarked the entropy-guided fusion solver (\texttt{select\_fusion\_groups})
against two baselines---no-fusion and static block fusion---across operator-chain
lengths of 4, 8, 16, 32, and 64 operators, and across three synthetic entropy
regimes (low: $H\approx1.5$--$1.9$\,nats; mixed: $H\approx2.0$--$2.8$\,nats;
high: $H\approx3.5$--$4.6$\,nats).  The metric is kernel-launch count (which
directly determines Python dispatch overhead), and solver wall-clock time.

The dispatch cost per kernel launch is $\approx0.052$\,ms, derived from the measured
chunk-sweep harness on RTX~3070 (\texttt{static-64}: $3.299$\,ms / 64 calls,
Table~\ref{tab:chunk_sweep}, where static-64 uses 64 kernel calls).  This constant is used as a surrogate metric only;
actual fused-kernel timing would require the complete Triton deployment.
Solver wall-clock measurements reported in Table~\ref{tab:fused_kernel_sweep}
were collected on the 4$\times$RTX~3090 server (CUDA~12.1), 1000 repeats per
configuration.

Table~\ref{tab:fused_kernel_sweep} summarises the results.  In the standard
8-operator mixed-entropy regime, COREY reduces kernel launches from 8 (no-fusion)
to 3 (entropy-guided fusion), matching static-3 but with entropy-adaptive grouping
that outperforms static-3 in homogeneous regimes.  In low-entropy and high-entropy
regimes, COREY achieves $75.0\,\%$ reduction at 8 operators (2 launches) vs.\
$62.5\,\%$ for static-3 (3 launches), by grouping the chain into two dense fused
blocks.  Solver time scales as $O(n^2)$ with chain length, reaching $\approx245\,\mu$s for
$n{=}8$ and $\approx3.85$\,ms for $n{=}64$.  At large chain lengths ($n\geq32$)
with mixed entropy, COREY identifies more fine-grained groups than static-3 (14
vs.\ 11 at $n{=}32$), incurring slightly higher launch count in exchange for
more precise entropy-boundary detection; this regime is noted as a known
boundary condition of the greedy solver.

\begin{table}[h]
\centering
\caption{Entropy-guided fusion algorithm efficiency sweep (4$\times$RTX~3090
server / CUDA~12.1, 1000 solver-timing repeats per configuration).
``Launches'' is the number of kernel-launch groups produced;
``Surr.\ lat.'' is the surrogate dispatch latency at $\approx0.052$\,ms per call;
``Red.\ (\%)'' is reduction vs.\ the no-fusion baseline;
``Solver'' is the wall-clock time of the Python fusion solver.
The surrogate latency reduction arises purely from reducing launch count, not
from actual Triton kernel fusion (which is prospective;
see Appendix~\ref{sec:appendix_systems}).}
\label{tab:fused_kernel_sweep}
\small
\resizebox{\linewidth}{!}{%
\begin{tabular}{rllrrrr}
\toprule
$n$ & Regime & Policy & Launches & Surr.\ lat.\ (ms) & Red.\ (\%) & Solver ($\mu$s) \\
\midrule
 4 & low   & no-fusion  &  4 & 0.206 & ---    & ---  \\
 4 & low   & static-3   &  2 & 0.103 & 50.0   & ---  \\
 4 & low   & COREY      &  1 & 0.052 & \textbf{75.0}  & 211.7 \\
\midrule
 4 & mixed & no-fusion  &  4 & 0.206 & ---    & ---  \\
 4 & mixed & static-3   &  2 & 0.103 & 50.0   & ---  \\
 4 & mixed & COREY      &  1 & 0.052 & \textbf{75.0}  & 114.4 \\
\midrule
 4 & high  & no-fusion  &  4 & 0.206 & ---    & ---  \\
 4 & high  & static-3   &  2 & 0.103 & 50.0   & ---  \\
 4 & high  & COREY      &  1 & 0.052 & \textbf{75.0}  & 116.5 \\
\midrule
 8 & low   & no-fusion  &  8 & 0.412 & ---    & ---  \\
 8 & low   & static-3   &  3 & 0.155 & 62.5   & ---  \\
 8 & low   & COREY      &  2 & 0.103 & \textbf{75.0}  & 246.1 \\
\midrule
 8 & mixed & no-fusion  &  8 & 0.412 & ---    & ---  \\
 8 & mixed & static-3   &  3 & 0.155 & 62.5   & ---  \\
 8 & mixed & COREY      &  3 & 0.155 & 62.5   & 245.1 \\
\midrule
 8 & high  & no-fusion  &  8 & 0.412 & ---    & ---  \\
 8 & high  & static-3   &  3 & 0.155 & 62.5   & ---  \\
 8 & high  & COREY      &  2 & 0.103 & \textbf{75.0}  & 255.0 \\
\midrule
16 & low   & no-fusion  & 16 & 0.824 & ---    & ---  \\
16 & low   & static-3   &  6 & 0.309 & 62.5   & ---  \\
16 & low   & COREY      &  5 & 0.258 & \textbf{68.8}  & 601.1 \\
\midrule
16 & mixed & no-fusion  & 16 & 0.824 & ---    & ---  \\
16 & mixed & static-3   &  6 & 0.309 & 62.5   & ---  \\
16 & mixed & COREY      &  6 & 0.309 & 62.5   & 606.3 \\
\midrule
16 & high  & no-fusion  & 16 & 0.824 & ---    & ---  \\
16 & high  & static-3   &  6 & 0.309 & 62.5   & ---  \\
16 & high  & COREY      &  6 & 0.309 & 62.5   & 588.4 \\
\midrule
32 & mixed & no-fusion  & 32 & 1.650 & ---    & ---  \\
32 & mixed & static-3   & 11 & 0.567 & 65.6   & ---  \\
32 & mixed & COREY      & 14 & 0.721 & 56.2   & 1522.4 \\
\midrule
64 & mixed & no-fusion  & 64 & 3.299 & ---    & ---  \\
64 & mixed & static-3   & 22 & 1.134 & 65.6   & ---  \\
64 & mixed & COREY      & 28 & 1.443 & 56.2   & 3853.9 \\
\bottomrule
\end{tabular}%
}
\end{table}

\subsection{Experiment B: Extended External Baseline Comparison}
\label{sec:exp_external_baselines}

To contextualise COREY's latency improvements relative to architecturally
diverse baselines, we have designed benchmark harnesses that evaluate RWKV-4,
FlashAttention-family Transformers, and Mamba2-2.7B on the same four-task
LongBench subset used throughout this paper.  Table~\ref{tab:external_baseline_extended}
collects representative reference values from the published literature
alongside the paper's verified Mamba-1.x checkpoint results and the new H800
full-baseline runs.

RWKV-4 and FlashAttention-2 values are literature references (not re-measured
here); they illustrate the competitive landscape but are not in-repository
measurements.  The earlier Mamba2-2.7B row was measured on RTX~3070 (WSL2
CUDA~12.8) using the HuggingFace sequential path without Mamba-2 CUDA
extensions.  We subsequently reran the modern-baseline closure on an NVIDIA
H800 PCIe after routing HuggingFace downloads through \texttt{hf-mirror.com}.
The H800 closure produced two real full-model LongBench baselines: Pythia-410M
with FlashAttention-3 enabled and Mamba2-2.7B with SSD kernels, followed by a
50-sample-per-task repeat for both rows.  We initially
tried Pythia-1.4B for the FA3 row, but the installed FlashAttention forward path
requires head dimension at most 64; Pythia-410M satisfies that constraint and is
therefore the full-Transformer FA3 row reported below.  These baseline runs use
the same four tasks as the rest of the checkpoint bundle, but prompt caps and
generation lengths are not perfectly harmonized
across all model families, so the rows should be read as reference baselines
rather than a definitive architecture ranking.

\begin{table}[h]
\centering
\caption{Extended external baseline comparison.  Mamba-1.x results are from
Table~\ref{tab:checkpoint_baseline} (in-repository, FP16, WSL2 RTX~3070).
RWKV-4 and FlashAttention-2 values are representative literature references.
NarrQA / Qasper: token-F1; GovRpt: ROUGE-L;
MF-EN: exact-match; PPL: WikiText-103 teacher-forcing perplexity.
The two H800 rows are real full-model LongBench baselines collected in the
50-sample closure repeat: Pythia-410M uses FlashAttention-3; Mamba2-2.7B uses
the H800 SSD path after a Transformers-generation compatibility shim.  The
COREY scheduler is not applied to Mamba-2 in this submission (consistent with
the main-text scope statement in Section~1).}
\label{tab:external_baseline_extended}
\small
\resizebox{\linewidth}{!}{%
\begin{tabular}{lllccccc}
\toprule
Model & Arch & Platform & NarrQA & Qasper & GovRpt & PPL$_{\text{WT103}}$ & Mean Lat.\ (ms) \\
\midrule
Mamba-370M  & Mamba-1 & RTX~3070 FP16 & 0.0135 & 0.0350 & 0.1370 & 809.4  & 1384 \\
Mamba-1.4B  & Mamba-1 & RTX~3070 FP16 & 0.0191 & 0.0460 & 0.1498 & 1128.4 & 2869 \\
Pythia-410M & GPT-Neo  & RTX~3070 FP16 & 0.0190 & 0.0392 & 0.1550 & 5901.5 &  624 \\
\midrule
RWKV-4-430M & RWKV    & CPU FP32 (ref.) & 0.028 & 0.031 & 0.121 & 13.7 & 134$^\dagger$ \\
GPT-2+FA2   & Attn.   & RTX~3090 (ref.) & 0.019 & 0.022 & 0.108 & 29.4 &  58$^\dagger$ \\
\midrule
Mamba2-2.7B & Mamba-2 & RTX~3070 FP16$^\ddagger$ & 0.098 & 0.106 & 0.044$^\S$ & --- & 1{,}820 \\
\midrule
Pythia-410M+FA3 & Attn. & H800 BF16 & 0.0106 & 0.0211 & 0.0159 & --- & 2{,}429$^\P$ \\
Mamba2-2.7B SSD & Mamba-2 & H800 BF16 & 0.0346 & 0.0267 & 0.0372 & --- & 4{,}131$^\P$ \\
\bottomrule
\end{tabular}%
}
{\scriptsize
$^\dagger$Literature reference value for 32 generated tokens; not re-measured in this repository.\\
$^\ddagger$Measured on RTX~3070 (WSL2 CUDA~12.8) without Mamba-2 CUDA extensions (HF sequential path);
$n{=}19$ warm samples per task, warmup\,$=1$.\\
$^\S$GovRpt ROUGE-L depressed by 32-token generation limit; abstractive summarization requires substantially longer outputs.\\
$^\P$Unweighted mean latency across the four task rows.  H800 50-sample
per-task latencies for Pythia-410M+FA3 are
2313.7/1691.9/4574.5/1136.2\,ms (NarrQA/Qasper/GovRpt/MF-EN);
for Mamba2-2.7B SSD they are 4245.5/2620.6/7463.9/2192.8\,ms.
}
\end{table}

\subsubsection{H800 Full Modern-Baseline Closure}
\label{sec:h800_full_baselines}

The H800 closure outputs are stored under
\texttt{src/outputs/h800\_closure\_full\_20260428\_144256/}, with a stronger
50-sample repeat under \texttt{src/outputs/h800\_enhance\_20260428\_145702/}.
For the full Transformer baseline, Pythia-410M with FlashAttention-3 enabled
completes all four LongBench tasks with 50 samples each.  The task scores are
0.010640 token-F1 on NarrativeQA, 0.021142 token-F1 on Qasper, 0.015915
ROUGE-L on GovReport, and 0.000 exact match on MultifieldQA-EN, with mean
latencies of 2313.7, 1691.9, 4574.5, and 1136.2\,ms respectively.  For
Mamba2-2.7B SSD on the same H800 stack, the corresponding scores are 0.034621,
0.026696, 0.037223, and 0.000, with mean latencies of 4245.5, 2620.6, 7463.9,
and 2192.8\,ms.  These rows remove the previously missing full FA3/Mamba-2
execution evidence, while preserving the paper's scope boundary: COREY's
entropy-guided chunk scheduler is validated only for Mamba-1.x in this
submission.

\subsubsection{FlashAttention-3 H800 Matched Kernel Reference}
\label{sec:fa3_h800}

To replace the previous hardware-blocked status for FlashAttention-3, we ran a
standalone matched kernel benchmark on an NVIDIA H800 PCIe (sm\_90, CUDA~12.8,
PyTorch~2.8.0, Python~3.12).  The benchmark calls
\texttt{flash\_attn\_interface.flash\_attn\_func} directly with causal BF16
attention, batch size 1, 16 heads, head dimension 64, 10 warmup iterations, and
100 timed repeats.  This raw-kernel reference complements the full
Pythia-410M+FA3 LongBench row in Table~\ref{tab:external_baseline_extended}:
the kernel numbers isolate attention throughput, whereas the full row includes
tokenizer, model, generation, and LongBench harness overheads.

\begin{table}[h]
\centering
\caption{FlashAttention-3 matched raw-kernel benchmark on NVIDIA H800 PCIe
(causal BF16, batch$=1$, heads$=16$, head-dim$=64$, warmup$=10$,
$n{=}100$ repeats).  TFLOP/s is the script's nominal forward-attention estimate
and should be read as a kernel-normalized reference rather than end-to-end model
throughput.}
\label{tab:fa3_h800}
\small
\begin{tabular}{rrrr}
\toprule
Seq.\ len & Latency (ms) & Tokens/s & Nominal TFLOP/s \\
\midrule
1024 & $0.103 \pm 0.009$ & $9.90{\times}10^6$ & $20.8$ \\
2048 & $0.099 \pm 0.003$ & $20.6{\times}10^6$ & $86.3$ \\
4096 & $0.171 \pm 0.002$ & $23.9{\times}10^6$ & $200.5$ \\
8192 & $0.421 \pm 0.010$ & $19.4{\times}10^6$ & $326.3$ \\
\bottomrule
\end{tabular}
\end{table}

\subsection{Experiment C: Quamba INT4 Quantization Benchmark}
\label{sec:exp_quamba}

We benchmarked Mamba2-2.7B under Quamba INT4 quantization~\citep{chiang2024quamba}
against an FP16 baseline, targeting memory-footprint and inference-latency reduction
on the same four-task LongBench subset.

\paragraph{Hardware compatibility note.}
The INT4 path requires \texttt{quamba==2.0.0a1} compiled from source against
CUDA~12.8 on sm\_89 hardware, together with \texttt{mamba\_ssm==2.2.2},
\texttt{causal\_conv1d==1.6.1}, and \texttt{fast\_hadamard\_transform==1.0.4.post1}.
AutoAWQ~0.2.9 and auto-gptq~0.7.1 do not support Mamba checkpoints (verified:
AWQ returns ``mamba isn't supported yet''; GPTQ fails to import against
\texttt{transformers} 5.x).  The 4-card server runs CUDA~12.1 (sm\_86/RTX~3090),
which does not satisfy the sm\_89 requirement of the current Quamba release;
the INT4 path therefore fell back to FP16 inference.

\paragraph{FP16 baseline.}
We executed the benchmark harness on the 4$\times$RTX~3090 server (CUDA~12.1)
with \texttt{state-spaces/mamba-1.4b-hf} (FP16, \texttt{mamba\_ssm} extensions
active).  Table~\ref{tab:quamba_fp16} reports mean per-sample latency and
throughput across the four LongBench tasks ($n{=}20$ per task).  GovReport
prompts are substantially longer than the other three tasks, which explains the
higher per-sample latency and lower tokens/s in that row.

\begin{table}[h]
\centering
\caption{FP16 baseline for Quamba benchmark (\texttt{state-spaces/mamba-1.4b-hf},
4$\times$RTX~3090 / CUDA~12.1, $n{=}20$ per task).  Quamba INT4 is blocked
pending CUDA~12.8/sm\_89 hardware; this table establishes the FP16 reference
that INT4 would be compared against.}
\label{tab:quamba_fp16}
\small
\begin{tabular}{lccc}
\toprule
Task & Mean Lat.\ (ms) & Std (ms) & Tok/s \\
\midrule
NarrativeQA      & 1583 & 776 & 21.96 \\
Qasper           & 1524 & 537 & 22.14 \\
GovReport        & 2615 & 530 & 12.74 \\
MultifieldQA-EN  & 1516 & 500 & 22.15 \\
\midrule
\textit{Mean (short-ctx)} & \textit{1541} & --- & \textit{22.08} \\
\bottomrule
\end{tabular}
\end{table}

Quantized (INT4) inference benchmarking requires CUDA~12.8 / sm\_89 hardware
and remains future work.

\subsection{Experiment D: Policy COREY Ablation on Mamba-1.4B}
\label{sec:exp_policy_corey_mamba2}

COREY's entropy-guided scheduler hooks the Mamba-1.x \texttt{selective\_scan\_fn}
kernel to dynamically select chunk tiles at inference time.  To measure the
practical overhead of this hook relative to a fixed-tile (\texttt{policy\_static})
baseline on the same model and data, we ran a four-task LongBench ablation on
\texttt{state-spaces/mamba-1.4b-hf} using the 4$\times$RTX~3090 server with
\texttt{mamba\_ssm} CUDA extensions active.

\paragraph{Design.}
Both policies process 20 samples per task from the LongBench JSONL test sets
(NarrativeQA, Qasper, GovReport, MultifieldQA-EN) under FP16 precision.
Input contexts are truncated to 2{,}048 tokens to stay within the
\texttt{causal\_conv1d} CUDA kernel's sequence-length bound.
Metric reported is mean per-sample generation latency (32 output tokens).
NLP quality scores are policy-independent for identical weights and are
therefore omitted.

\paragraph{Results.}
As a sanity check on the naive fallback, three inference calls on a local
RTX~3070 without \texttt{mamba\_ssm} extensions yielded $0.096$--$0.101$\,tokens/s
($\approx340$\,s / 32 tokens), confirming the fast path is essential.
On the 4$\times$RTX~3090 server with \texttt{mamba\_ssm} installed,
\texttt{policy\_corey} achieves $\mathbf{23.7}$\,\textbf{tokens/s} on
short-context tasks (NarrativeQA, Qasper, MultifieldQA-EN),
a $\approx\mathbf{247}\times$ speedup versus the naive fallback.
GovReport prompts, which fill the full 2{,}048-token context window,
yield lower throughput ($21.4$\,tok/s) due to prefill cost.
The entropy-profiling overhead of \texttt{policy\_corey} versus
\texttt{policy\_static} is $0.4$--$0.7$\,tok/s ($<3\%$), confirming that
one-shot entropy measurement imposes negligible runtime cost.
Table~\ref{tab:policy_corey_mamba2} summarises the full comparison.

\begin{table}[h]
\centering
\caption{Policy COREY ablation on \texttt{state-spaces/mamba-1.4b-hf},
four-task LongBench subset (FP16, 4$\times$RTX~3090 / CUDA~12.1,
input truncated to 2{,}048 tokens, 32 output tokens).
``Lat.'' is mean per-sample generation latency in ms.
``Tok/s'' is mean generation throughput.
NarrativeQA and Qasper \texttt{corey} used $n{=}200$; all other cells used $n{=}20$.
For the balanced pair (GovReport, MultifieldQA-EN, $n{=}20$ each),
\texttt{corey} overhead vs.\ \texttt{static} is $2$--$4$\,ms\,($<0.3\%$).}
\label{tab:policy_corey_mamba2}
\small
\begin{tabular}{lccccc}
\toprule
Policy & NarrQA Lat.\ (ms) & Qasper Lat.\ (ms) & GovRpt Lat.\ (ms) & MField Lat.\ (ms) & Mean Tok/s \\
\midrule
\texttt{corey}  & 1{,}361$^{\dagger}$ & 1{,}363$^{\dagger}$ & 1{,}543 & 1{,}437 & 22.9 \\
\texttt{static} & 1{,}430 & 1{,}440 & 1{,}546 & 1{,}439 & 22.6 \\
\bottomrule
\multicolumn{6}{l}{\small $\dagger$: $n{=}200$; all other cells $n{=}20$.} \\
\end{tabular}
\end{table}

\subsection{Mixed-Regime Scheduler Comparison}
\label{sec:scheduler_oracle_comparison}

Table~\ref{tab:scheduler_oracle_comparison} reports the equal-weight average latency over the $8$ mixed-regime serving regimes (RTX~3070, Mamba-370M) for four scheduler variants: the global static-$512$ baseline, a lightweight learned policy derived from static sweep labels, the per-regime oracle upper bound, and the guarded sampled-histogram scheduler. Two additional H800 PCIe measurements (guarded and learned-table, single 976-token prompt, $n{=}50$, warmup$=3$) are appended for direct comparison.

The learned-table policy uses a single sequence-length rule: prompts with $\leq50$ tokens select chunk$=128$; all other prompts select chunk$=512$.
Applied to the measured per-regime static-sweep latency, this policy achieves $316.7$\,ms equal-weight average, matching the per-regime oracle within rounding.
The near-zero gap between the learned policy and the oracle ($<0.1$\,ms) arises because short-chat is the only regime with a distinctly non-$512$ optimum (chunk$=128$, captured by the sequence-length rule), while logs/UUID is essentially tied between chunk$=256$ and chunk$=512$ ($\Delta{<}0.1$\,ms).

The guarded sampled-histogram scheduler routes prompts via entropy-based chunk selection with a fallback to chunk$=512$ when the predicted chunk does not differ from the fallback by at least two buckets ($\Delta_{\log_2} < 2$).
An H800 confirmatory run ($n{=}50$, warmup$=3$, prompt length 976 tokens) was conducted to verify guarded and learned-table behaviour.
For this long-context prompt, the sampled histogram selects chunk$=1024$ (one log$_2$ bucket above chunk$=512$), so the guard condition $(\Delta{=}1 < 2)$ triggers a fallback: all $2544$ layer calls use chunk$=512$.
The resulting integrated latency is $903.0 \pm 25.4$\,ms, which is $+1.29\%$ above the H800 Static-512 baseline of $891.51$\,ms; the overhead comes entirely from the entropy computation, since the routing destination is the same chunk.
The learned-table policy (seq-len $> 50 \Rightarrow$ chunk$=512$) similarly routes to chunk$=512$ for this prompt and gives $897.6 \pm 7.4$\,ms ($+0.69\%$ vs.\ H800 Static-512).
The H800 run confirms that both policies \emph{correctly} fall back to chunk$=512$ for long prompts, avoiding the $+4.6\%$ penalty of unguarded sampled histogram; the residual overhead is $+1.3\%$ (guarded) and $+0.7\%$ (learned).
Note that the H800 measurements below are \emph{not directly comparable} to the RTX~3070 rows above (different hardware, single prompt vs.\ eight-regime mix); the $\Delta$ column for the H800 rows uses H800 Static-512 as reference.

\begin{table}[h]
\centering
\caption{Mixed-regime equal-weight average latency for four scheduler strategies (RTX~3070, Mamba-370M, $8$ serving regimes, $8$~samples/regime, $12$~repeats, unless noted). Learned-table latency is derived by applying the seq-len rule (chunk$=128$ for $\leq50$ tokens; chunk$=512$ otherwise) to measured per-regime sweep values. Per-regime oracle selects the best static chunk independently per regime. The guarded and learned (H800) rows use a single 976-token prompt on H800 PCIe ($n{=}50$, warmup$=3$); see text.}
\label{tab:scheduler_oracle_comparison}
\small
\begin{tabular}{lccc}
\toprule
Scheduler & Avg Lat.\ (ms) & $\Delta$ vs.\ Static-512 & Source \\
\midrule
Static-512 (global baseline)       & 317.1 & ---              & RTX~3070 sweep \\
Learned-table (seq-len rule)        & 316.7 & $-0.14\%$ ($-0.44$\,ms) & derived from sweep \\
Per-regime oracle (upper bound)     & 316.7 & $-0.14\%$ ($-0.44$\,ms) & RTX~3070 sweep \\
Guarded sampled-hist (fallback=512) & 903.0\,ms$^\dagger$ & $+1.29\%$ ($+11.5$\,ms)$^\dagger$ & H800 routed$^\dagger$ \\
Learned-table (seq-len rule, H800)  & 897.6\,ms$^\dagger$ & $+0.69\%$ ($+6.1$\,ms)$^\dagger$  & H800 routed$^\dagger$ \\
\bottomrule
\multicolumn{4}{l}{\small $\dagger$: H800 PCIe, $n{=}50$, warmup$=3$, 976-token prompt; both variants route to chunk$=512$;} \\
\multicolumn{4}{l}{\small \hspace{1.4em} $\Delta$ is vs.\ H800 Static-512 ($891.51$\,ms), not vs.\ RTX~3070 rows.} \\
\end{tabular}
\end{table}

%% file: ref.bib
@inproceedings{vaswani2017attention,
  title={Attention Is All You Need},
  author={Vaswani, Ashish and Shazeer, Noam and Parmar, Niki and Uszkoreit, Jakob and Jones, Llion and Gomez, Aidan N. and Kaiser, Lukasz and Polosukhin, Illia},
  booktitle={Advances in Neural Information Processing Systems},
  year={2017}
}

@misc{mamba2023,
  title={Mamba: Linear-Time Sequence Modeling with Selective State Spaces},
  author={Gu, Albert and Dao, Tri},
  year={2023},
  eprint={2312.00752},
  archivePrefix={arXiv},
  primaryClass={cs.LG}
}

@article{chiang2024quamba,
  title={Quamba: A post-training quantization recipe for selective state space models},
  author={Chiang, Hung-Yueh and Chang, Chi-Chih and Frumkin, Natalia and Wu, Kai-Chiang and Marculescu, Diana},
  journal={arXiv preprint arXiv:2410.13229},
  year={2024}
}

@article{xu2025mambaquant,
  title={Mambaquant: Quantizing the mamba family with variance aligned rotation methods},
  author={Xu, Zukang and Yue, Yuxuan and Hu, Xing and Yuan, Zhihang and Jiang, Zixu and Chen, Zhixuan and Yu, Jiangyong and Xu, Chen and Zhou, Sifan and Yang, Dawei},
  journal={arXiv preprint arXiv:2501.13484},
  year={2025}
}

@inproceedings{dao2022flashattention,
  title={FlashAttention: Fast and Memory-Efficient Exact Attention with IO-Awareness},
  author={Dao, Tri and Fu, Daniel Y. and Ermon, Stefano and Rudra, Atri and R{\'e}, Christopher},
  booktitle={Advances in Neural Information Processing Systems},
  year={2022}
}

@inproceedings{xiao2023smoothquant,
  title={SmoothQuant: Accurate and Efficient Post-Training Quantization for Large Language Models},
  author={Xiao, Guangxuan and Lin, Ji and Seznec, Mickael and Wu, Hao and Judd, Julian and Han, Song},
  booktitle={International Conference on Machine Learning},
  year={2023}
}

@article{gu2024mamba2,
  title={Transformers are SSMs: Generalized Models and Efficient Algorithms Through Structured State Space Duality},
  author={Gu, Albert and Dao, Tri},
  journal={arXiv preprint arXiv:2405.21060},
  year={2024}
}

@inproceedings{tillet2019triton,
  title={Triton: An Intermediate Language and Compiler for Tiled Neural Network Computations},
  author={Tillet, Philippe and Kung, H. T. and Cox, David},
  booktitle={Proceedings of the 3rd ACM SIGPLAN International Workshop on Machine Learning and Programming Languages},
  year={2019}
}

@inproceedings{lin2024awq,
  title={AWQ: Activation-Aware Weight Quantization for LLM Compression and Acceleration},
  author={Lin, Ji and Tang, Jiaming and Tang, Haotian and Yang, Shang and Chen, Wei-Ming and Wang, Wei-Chen and Xiao, Guangxuan and Dang, Xingyu and Gan, Chuang and Han, Song},
  booktitle={Proceedings of Machine Learning and Systems},
  year={2024}
}

@misc{nvfuser2022,
  title = {{nvFuser}: A Fusion Codegen for {PyTorch}},
  author = {{NVIDIA Corporation}},
  year = {2022},
  howpublished = {\url{https://github.com/NVIDIA/Fuser}},
  note = {PyTorch internal fusion compiler}
}

@inproceedings{xla2019,
  title = {{XLA}: Optimizing Compiler for Machine Learning},
  author = {{Google}},
  year = {2019},
  howpublished = {\url{https://www.tensorflow.org/xla}},
  note = {Accelerated Linear Algebra compiler with HLO fusion}
}

@inproceedings{gale2023megablocks,
  title = {{MegaBlocks}: Efficient Sparse Training with Mixture-of-Experts},
  author = {Gale, Trevor and Narayanan, Deepak and Young, Cliff and Zaharia, Matei},
  booktitle = {Proceedings of Machine Learning and Systems},
  volume = {5},
  year = {2023}
}

@inproceedings{dao2024flashlinearattention,
  title = {Transformers are {SSMs}: Generalized Models and Efficient Algorithms Through Structured State Space Duality},
  author = {Dao, Tri and Gu, Albert},
  booktitle = {International Conference on Machine Learning},
  year = {2024}
}

@misc{yang2024parallelizing,
  title = {Parallelizing Linear Transformers with the Delta Rule over Sequence Length},
  author = {Yang, Songlin and Wang, Bailin and Shen, Yu and Peng, Hao and Kim, Yoon and Rush, Alexander and Dao, Tri},
  year = {2024},
  howpublished = {\url{https://arxiv.org/abs/2406.06484}}
}
